\newcommand\eqdef{\ensuremath{\stackrel{\rm def}{=}}} % Equal by definition
\title{Shaping Level Sets with Submodular Functions}
\author{
Francis Bach  \\
INRIA - Sierra project-team\\
Laboratoire d'Informatique de l'Ecole Normale Sup\'erieure \\
Paris, France \\
\texttt{francis.bach@ens.fr} }
\newcommand{\BEAS}{\begin{eqnarray*}}
\newcommand{\EEAS}{\end{eqnarray*}}
\newcommand{\BEA}{\begin{eqnarray}}
\newcommand{\EEA}{\end{eqnarray}}
\newcommand{\BEQ}{\begin{equation}}
\newcommand{\EEQ}{\end{equation}}
\newcommand{\BIT}{\begin{itemize}}
\newcommand{\EIT}{\end{itemize}}
\newcommand{\BNUM}{\begin{enumerate}}
\newcommand{\ENUM}{\end{enumerate}}
\newcommand{\BA}{\begin{array}}
\newcommand{\EA}{\end{array}}
\newcommand{\idm}{I}
\newcommand{\rb}{\mathbb{R}}
\newcommand{\BlackBox}{\rule{1.5ex}{1.5ex}}  % end of proof
\newcommand{\lova}{Lov\'asz }
\newenvironment{proof}{\par\noindent{\bf Proof\ }}{\hfill\BlackBox\\[2mm]}
\newtheorem{lemma}{Lemma}
\newtheorem{theorem}{Theorem}
\newtheorem{proposition}{Proposition}
\newcommand{\mysec}[1]{Section~\ref{sec:#1}}
\newcommand{\eq}[1]{Eq.~(\ref{eq:#1})}
\newcommand{\myfig}[1]{Figure~\ref{fig:#1}}
\begin{document}

\maketitle

\vspace*{-.2cm}

\begin{abstract}

\vspace*{-.0cm}

We consider a class of sparsity-inducing regularization terms based on submodular functions. While previous work has focused on
non-decreasing functions, we explore symmetric submodular functions and their \lova extensions. We show that the \lova extension may be seen as the convex envelope of a function that depends on level sets  (i.e., the set of indices whose corresponding components of the underlying predictor are greater than a given constant): this leads to a class of convex structured regularization terms that impose prior knowledge on the level sets, and not only on the supports of the underlying predictors. We provide a unified set of optimization algorithms, such as proximal operators, and theoretical guarantees (allowed level sets and recovery conditions). By selecting specific submodular functions,  we   give a new interpretation to known norms, such as the total variation; we also define new norms, in particular ones that are based on order statistics with application to clustering and outlier detection, and on noisy cuts in graphs with application to change point detection in the presence of outliers.

\vspace*{-.1cm}

\end{abstract}

\section{Introduction}

\vspace*{-.15cm}

The concept of parsimony is central in many scientific domains. In the context of statistics, signal processing or machine learning, it may take several forms. Classically, in a variable or feature selection problem, a sparse solution with many zeros is sought so that the model is either more interpretable, cheaper to use, or simply matches available prior knowledge (see, e.g.,~\cite{Zhaoyu,negahban2009unified,bach2010structured} and references therein). 
In this paper, we instead consider sparsity-inducing regularization terms that will lead to  solutions with \emph{many equal values}. A classical example is the total variation in one or two dimensions, which leads to piecewise constant solutions~\cite{tibshirani2005sparsity,chambolle2009total} and can be applied to various image labelling problems~\cite{boykov2001fast,chambolle2009total}, or change point detection tasks~\cite{harchaoui2008catching,bleakley,kolar2009sparsistent}. Another example is the ``Oscar'' penalty which induces automatic grouping of the features~\cite{oscar}. In this paper, we follow the approach of~\cite{bach2010structured}, who designed sparsity-inducing norms based on \emph{non-decreasing} submodular functions, as a convex approximation to imposing a specific prior on the \emph{supports} of the predictors. Here, we show that a similar parallel holds for some other class of submodular functions, namely non-negative set-functions which are equal to zero for the full and empty set. Our main instance of such functions are \emph{symmetric} submodular functions.

We make the following contributions:
\BIT    
\item[$-$] We provide in \mysec{properties} explicit links between priors on level sets and certain submodular functions: we show  that the \lova extensions (see, e.g.,~\cite{submodular_tutorial} and a short review in \mysec{submodular}) associated to  these  submodular functions are the convex envelopes (i.e., tightest convex lower bounds) of  specific functions that depend on all level sets of the underlying vector.

\item[$-$] In \mysec{examples}, we reinterpret existing norms such as the total variation and design new norms, based on noisy cuts or order statistics. We propose applications to clustering and outlier detection, as well as to change point detection in the presence of outliers.

\item[$-$] We provide unified algorithms in \mysec{algorithms}, such as proximal operators, which are based on a sequence of submodular function minimizations (SFMs), when such SFMs are efficient, or by adapting the generic slower approach of~\cite{bach2010structured} otherwise.

\item[$-$] We derive unified theoretical guarantees for level set recovery in \mysec{theory}, showing that even in the absence of correlation between predictors, level set recovery is not always guaranteed, a situation which is to be contrasted with traditional support recovery situations~\cite{Zhaoyu,bach2010structured}.
 \EIT

\vspace*{-.2cm}

\paragraph{Notation.}
For $w \in \rb^p$ and $q \in [1,\infty]$, we  denote by $\| w\|_q$ the $\ell_q$-norm
of $w$.  Given a subset $A $ of  $V = \{1,\dots,p\}$, $1_A \in \{0,1\}^p$ is the indicator vector of the subset $A$. 
Moreover, given a vector $w$ and a matrix~$Q$, $w_A$ and $Q_{AA}$ denote the corresponding subvector and submatrix of $w$ and $Q$. Finally, for $w \in \rb^p$ and $A \subset V$, $w(A)=\sum_{k \in A} w_k = w^\top 1_A $ (this defines a modular set-function). In this paper, for a certain vector $w \in \rb^p$, we call \emph{level sets} the sets of indices which are larger (or smaller) or equal to a certain constant $\alpha$, which we denote $\{ w \geqslant \alpha\}$
(or $\{ w \leqslant \alpha\}$), while we call \emph{constant sets} the sets of indices which are equal to a constant $\alpha$, which we denote $\{ w = \alpha\}$.

\section{Review of Submodular Analysis}
\label{sec:submodular}

\vspace*{-.1cm}

In this section, we review relevant results from submodular analysis. For more details, see, e.g.,~\cite{fujishige2005submodular}, and, for a review with proofs derived from classical convex analysis, see, e.g., \cite{submodular_tutorial}.

\textbf{Definition.} \hspace*{.05cm}
Throughout this paper, we consider a \emph{submodular} function $F$ defined on the power set $2^V$ of $V = \{1,\dots,p\}$, i.e., such that $
 \forall A,B \subset V, \   F(A) + F(B) \geqslant F(A \cup B) + F(A \cap B)$.
 Unless otherwise stated, we consider functions which are non-negative (i.e., such that $F(A) \geqslant 0$ for all $A \subset V$), and that satisfy $F(\varnothing) = F(V)=0$. Usual examples are symmetric submodular functions, i.e., such that $\forall A \subset V, F(V \backslash A) = F(A)$, which are known to always have non-negative values. We give several examples in \mysec{examples}; for illustrating the   concepts introduced in this section  and \mysec{properties}, we will consider the cut in an undirected chain graph, i.e., $F(A) = \sum_{j=1}^{p-1}|(1_A)_{j} - (1_A)_{j+1}|$.
   
\textbf{\lova extension.} \hspace*{.05cm}
 Given any set-function $F$ such that $F(V)=F(\varnothing)=0$, one can define its \emph{\lova extension} $f: \rb^p \to \rb$, as $
 f(w) = \int_\rb F( \{w \geqslant \alpha \}) d\alpha$ (see, e.g., \cite{submodular_tutorial} for this particular formulation).
The \lova extension is convex if and only if $F$ is submodular. Moreover, $f$ is piecewise-linear and for all $A \subset V$, $f(1_A) = F(A)$, that is, it is indeed an extension from $2^V$ (which can be identified to $\{0,1\}^p$ through indicator vectors) to $\rb^p$. Finally, it is always positively homogeneous. For the chain graph, we obtain the usual total variation $f(w) =\sum_{j=1}^{p-1} |w_j-w_{j+1}|$.

\textbf{Base polyhedron.} \hspace*{.05cm}
We denote by $B(F)= \{ s \in \rb^p, \ \forall A \subset V, \  s(A)\leqslant F(A), \ s(V) = F(V) \}$ the \emph{base polyhedron}~\cite{fujishige2005submodular}, 
where we use the notation $s(A) = \sum_{k \in A} s_k$.
One important result in submodular analysis is that if $F$ is a submodular function, then we have a representation of $f$ as a maximum of linear functions~\cite{fujishige2005submodular,submodular_tutorial}, i.e.,  for all $w \in \rb^p$, $
f(w) = \max_{ s \in B(F) } \  w^\top s$. 
Moreover, instead of solving a linear program with $2^p$ contraints, a solution $s$ may   be obtained by the following ``greedy algorithm'':
order the components of $w$ in decreasing order $w_{j_1} \geqslant \dots \geqslant w_{j_p}$, and then take for all $k \in   \{1,\dots,p\}$,
$s_{j_k} = F( \{ j_1,\dots,j_k\} ) - F( \{ j_1,\dots,j_{k-1}\} ) .$ 

\textbf{Tight and inseparable sets.} \hspace*{.05cm} The polyhedra~$\mathcal{U} = \{ w \in \rb^p, f(w) \leqslant 1 \}$ and $B(F)$ are polar to each other~(see, e.g.,~\cite{rockafellar97} for definitions and properties of polar sets). Therefore, the facial structure of $ \mathcal{U}$ may be obtained from the one of $B(F)$. Given $s \in B(F)$, a set $A \subset V$ is said \emph{tight} if $s(A)=F(A)$. It is known that the set of tight sets is a distributive lattice, i.e., if $A$ and $B$ are tight, then so are $A \cup B$ and $A \cap B$~\cite{fujishige2005submodular,submodular_tutorial}. 
The faces of $B(F)$ are thus intersections of hyperplanes $\{ s(A) = F(A) \}$ for $A$  belonging to certain distributive lattices (see Prop.~\ref{prop:faces}).
A set $A$ is said \emph{separable} if there exists a non-trivial partition of $A = B \cup C$ such that $F(A) = F(B)+F(C)$. A set is said inseparable if it is not separable. For the cut in an undirected graph, inseparable sets are exactly connected sets.

\section{Properties of the \lova Extension}
\label{sec:properties}
 
\vspace*{-.1cm}

 In this section, we derive properties of the \lova extension for submodular functions, which go beyond convexity and homogeneity. Throughout this section, we assume that $F$ is  a non-negative submodular  set-function that is equal to zero at $\varnothing$ and $V$. This immediately implies that  $f$   is invariant by addition of any constant vector (that is, $f(w+ \alpha 1_V) = f(w)$ for all $w \in \rb^p$ and $\alpha \in \rb$), and that $f(1_V) = F(V) = 0$. Thus, contrary to the non-decreasing case~\cite{bach2010structured}, our regularizers are not norms. However, they are norms on the hyperplane $\{w^\top 1_V = 0\}$ as soon as for $A \neq \varnothing$ and $A \neq V$, $F(A)>0$ (which we assume for the rest of this paper).
 
 We now show that the \lova extension is the convex envelope of a certain combinatorial function which does depend on all levets sets $\{ w \geqslant \alpha\}$ of $w \in \rb^p$  (see proof in supplementary material):
\begin{proposition}[Convex envelope]
\label{prop:envelope}

The \lova extension $f(w)$ is the convex envelope of the function
$w \mapsto \max_{ \alpha \in \rb}  F( \{ w \geqslant \alpha \} )$ on the set $[0,1]^p + \rb 1_V = \{ w \in \rb^p , \ \max_{k \in V} w_k - \min_{k \in V} w_k \leqslant 1 \}$.
\end{proposition}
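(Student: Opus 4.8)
The plan is to prove the two inequalities between $f$ and the convex envelope of $g(w) := \max_{\alpha \in \rb} F(\{w\geqslant\alpha\})$ on $C := [0,1]^p + \rb 1_V$. I would use the standard convex-combination characterisation of the convex envelope, namely that $\mathrm{env}(g)(w) = \inf\big\{ \sum_i \lambda_i g(v_i) : v_i \in C,\ \lambda_i \geqslant 0,\ \sum_i \lambda_i = 1,\ \sum_i \lambda_i v_i = w \big\}$ is the largest convex minorant of $g$ on $C$. Since $f$ is invariant under translation by multiples of $1_V$ (noted in the text), and so is $g$ because $\{ w + \beta 1_V \geqslant \alpha \} = \{ w \geqslant \alpha - \beta \}$, it suffices to argue on the compact polytope $\{ w \in [0,1]^p : \min_k w_k = 0 \}$, where piecewise-linearity makes this infimum well behaved (attained, the inf itself convex).

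First I would check that $f \leqslant g$ on $C$. From $f(w) = \int_\rb F(\{w\geqslant\alpha\})\, d\alpha$ and $F(\varnothing)=F(V)=0$, the integrand is supported on the interval $(\min_k w_k, \max_k w_k]$, whence $0 \leqslant f(w) \leqslant (\max_k w_k - \min_k w_k)\, g(w) \leqslant g(w)$ for $w \in C$; here nonnegativity of $f$ comes from $0 \in B(F)$ (since $F$ is nonnegative), and of $g$ from $F(\varnothing)=0$. Because $f$ is convex (being the \lova extension of a submodular function) and lies below $g$ on $C$, Jensen's inequality applied to any admissible representation gives $\sum_i \lambda_i g(v_i) \geqslant \sum_i \lambda_i f(v_i) \geqslant f(w)$, i.e. $\mathrm{env}(g) \geqslant f$ on $C$.

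The substantive step is the reverse inequality $\mathrm{env}(g) \leqslant f$ on $C$, for which I would exhibit one good representation of each $w \in C$. Order the coordinates $w_{j_1} \geqslant \cdots \geqslant w_{j_p}$ and use the \lova decomposition $w = w_{j_p} 1_V + \sum_{k=1}^{p-1} \mu_k 1_{\{j_1,\dots,j_k\}}$ with $\mu_k = w_{j_k} - w_{j_{k+1}} \geqslant 0$ and $\rho := \sum_{k=1}^{p-1} \mu_k = w_{j_1} - w_{j_p} \leqslant 1$, the last inequality being exactly the condition $w \in C$. Take the points $v_k = w_{j_p} 1_V + 1_{\{j_1,\dots,j_k\}}$ for $k = 1,\dots,p-1$ together with the constant vector $v_0 = w_{j_p} 1_V$, and weights $\lambda_k = \mu_k$, $\lambda_0 = 1 - \rho$: then $\lambda_k \geqslant 0$, $\sum_{k=0}^{p-1} \lambda_k = 1$, $\sum_{k=0}^{p-1} \lambda_k v_k = w$, and every $v_k$ lies in $C$ since it takes at most the two values $w_{j_p}$ and $w_{j_p}+1$. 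Each $v_k$ is two-valued, so a one-line computation of its level sets gives $f(v_k) = F(\{j_1,\dots,j_k\}) = g(v_k)$ and $f(v_0) = g(v_0) = 0$; hence $\mathrm{env}(g)(w) \leqslant \sum_{k} \lambda_k g(v_k) = \sum_k \lambda_k f(v_k) = \sum_{k=1}^{p-1} \mu_k F(\{j_1,\dots,j_k\}) = f(w)$, the last equality again being the \lova extension evaluated on the same decomposition. Combining the two directions yields $f = \mathrm{env}(g)$ on $C$, and the alternative description $C = \{ w : \max_k w_k - \min_k w_k \leqslant 1 \}$ is immediate from $w \in [0,1]^p + \rb 1_V$.

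I expect the only genuine difficulty to be locating the correct decomposition in the third paragraph: the observation that the slack $1-\rho$ is nonnegative precisely because $w\in C$ and must be assigned to the constant vector $v_0$, and that the two-valued vectors $w_{j_p}1_V + 1_{\{j_1,\dots,j_k\}}$ are simultaneously (i) points where $g$ and $f$ coincide and (ii) contained in a single affine piece of $f$ passing through $w$. Everything else — the integral bound, the evaluation of $f$ and $g$ on two-valued and constant vectors, dropping the terms with $\mu_k = 0$, and the degenerate case where $w$ is itself constant — is routine bookkeeping.
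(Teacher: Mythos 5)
Your proof is correct, but it takes a genuinely different route from the paper's. The paper works dually: it computes the Fenchel conjugate of $g(w)=\max_\alpha F(\{w\geqslant\alpha\})$ restricted to $[0,1]^p+\rb 1_V$ by optimizing over ordered partitions and an Abel summation, identifies it with $\iota_{s(V)=0}(s)+\max_{A\subset V}\{s(A)-F(A)\}$, observes this is also the conjugate of $f$ on the same set (via the classical identity reducing minimization of $f$ over $[0,1]^p$ to SFM), and concludes by biconjugation. You instead argue primally: the integral representation gives $f\leqslant (\max_k w_k-\min_k w_k)\,g\leqslant g$ on the constraint set, so $f$ is a convex minorant of $g$ there; and for each $w$ the \lova decomposition $w=w_{j_p}1_V+\sum_k\mu_k 1_{\{j_1,\dots,j_k\}}$ with the slack $1-\rho$ placed on the constant vector yields an admissible convex combination of two-valued points at which $f$ and $g$ coincide, with value exactly $f(w)$; Jensen then forces any convex minorant of $g$ to lie below $f$, so $f$ is the envelope. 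All the computations check out (the $v_k$ do lie in the set precisely because their range is $1$, and $\sum_k\mu_k F(\{j_1,\dots,j_k\})=f(w)$ since $F(V)=0$). Your approach is more elementary and more informative in one respect — it exhibits explicitly the extreme points where the envelope touches $g$, foreshadowing Prop.~2 — whereas the paper's conjugacy computation plugs directly into the base-polyhedron machinery reused throughout. One small remark: your reduction to the compact slice $\{w\in[0,1]^p:\min_k w_k=0\}$ is not actually needed, since both of your inequalities are established directly on the full (unbounded) set.
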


Note the difference with the result of~\cite{bach2010structured}: we consider here a different set on which we compute the convex envelope ($[0,1]^p + \rb 1_V$ instead of $[-1,1]^p$), and not a function of the support of $w$, but of \emph{all} its level sets.\footnote{Note that the support $\{ w = 0 \}$ is a constant set which is the intersection of two level sets.} Moreover, the \lova extension is a convex relaxation of a function of \emph{level sets} (of the form $\{w \geqslant \alpha\}$) and not of \emph{constant sets} (of the form $\{w = \alpha\}$). It
 would have been perhaps more intuitive to consider for example $\int_\rb F( \{ w = \alpha \} ) d\alpha$, since it does not depend on the ordering of the values that $w$ may take; however, the latter function does not lead to a convex function amenable to polynomial-time algorithms.
 This definition through level sets will generate some potentially undesired behavior (such as the well-known staircase effect for the one-dimensional total variation), as we show in \mysec{theory}.

The next proposition describes the set of extreme points of  the ``unit ball'' $\mathcal{U} = \{ w, \ f(w) \leqslant 1\}$, giving a first illustration of sparsity-inducing effects (see example in \myfig{balls}). 
\begin{proposition}[Extreme points]
\label{prop:extreme}
The extreme points of the set  $\mathcal{U} \cap \{ w^\top 1_V =0 \}$ are the projections of the vectors $1_A / F(A)$ on the plane $\{  w^\top 1_V = 0\}$, for $A$ such that $A$ is inseparable for $F$ and $V \backslash A$ is inseparable for $B \mapsto F(A \cup B) - F(A)$.
\end{proposition}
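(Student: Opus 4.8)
The plan is to read off the extreme points of $\mathcal{U}\cap H$, where $H:=\{w\in\rb^p:\ w^\top 1_V=0\}$, from the facets of the base polyhedron via polarity. Since $s(V)=F(V)=0$ for every $s\in B(F)$, the base polyhedron lies inside $H$, and by the representation $f(w)=\max_{s\in B(F)}w^\top s$ the convex body $\mathcal{U}\cap H=\{w\in H:\ w^\top s\leqslant 1\text{ for all }s\in B(F)\}$ is exactly the polar of $B(F)$ taken inside the $(p-1)$-dimensional space $H$. I would first record that this polarity is involutive here and that both bodies are genuine polytopes: non-negativity of $F$ gives $0\in B(F)$; the standing assumption $F(A)>0$ for $\varnothing\neq A\neq V$ implies that $0$ lies on no facet of $B(F)$, hence $0\in\operatorname{relint}B(F)$; and the same assumption together with $\min_{s\in B(F)}s(A)=F(V)-F(V\backslash A)=-F(V\backslash A)$ shows that no nontrivial equality $s(A)=F(A)$ holds throughout $B(F)$, so $\dim B(F)=p-1$. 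With $0$ in the relative interior and $B(F)$ full-dimensional in $H$, the classical polytope duality (\cite{rockafellar97}) gives a bijection between the extreme points of $\mathcal{U}\cap H$ and the facets (the $(p-2)$-dimensional faces) of $B(F)$: the extreme point attached to a facet $G$, cut out by a valid inequality $\ell(s)\leqslant c$ with $c>0$, is the unique $w\in H$ representing the linear form $\ell/c$ on $H$.

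The second and main step is to describe the facets of $B(F)$. Every proper face of $B(F)$ lies in some hyperplane $\{s(A)=F(A)\}$, and because $s(\varnothing)=0$ and $s(V)=0$ already hold on all of $B(F)$, each facet has the form $G_A:=B(F)\cap\{s(A)=F(A)\}$ with $\varnothing\neq A\neq V$ (compare Prop.~\ref{prop:faces}); conversely $G_A$ is a facet precisely when $\dim G_A=p-2$. Here I would invoke the standard decomposition $G_A=B(F_A)\times B(F^A)$, where $F_A$ is the restriction of $F$ to $2^A$ and $F^A:B\mapsto F(A\cup B)-F(A)$ is the contraction onto $2^{V\backslash A}$ (\cite{fujishige2005submodular,submodular_tutorial}); hence $\dim G_A=\dim B(F_A)+\dim B(F^A)\leqslant(|A|-1)+(|V\backslash A|-1)=p-2$, using that the base polytope of a submodular function on a ground set $W$ has dimension at most $|W|-1$, with equality if and only if $W$ is inseparable for that function — a criterion that follows from the same min-formula as above (or from the principal partition at the top level). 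Since $A$ is inseparable for $F_A$ if and only if it is inseparable for $F$, we conclude that $G_A$ is a facet exactly for the sets $A$ listed in the statement.

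It remains to combine the two steps. The facet $G_A$ is cut out by the valid inequality $1_A^\top s\leqslant F(A)$, so $\ell=1_A$ and $c=F(A)>0$, and the corresponding extreme point of $\mathcal{U}\cap H$ is the representative in $H$ of $1_A/F(A)$, that is, the orthogonal projection $\tfrac{1}{F(A)}\bigl(1_A-\tfrac{|A|}{p}1_V\bigr)$ of $1_A/F(A)$ onto $\{w^\top 1_V=0\}$. Finally I would verify that $A\mapsto \tfrac{1}{F(A)}\bigl(1_A-\tfrac{|A|}{p}1_V\bigr)$ is injective on the admissible sets: such a vector takes only the two coordinate values $1-|A|/p$ and $-|A|/p$, which differ by $1$, so two admissible sets producing proportional vectors must share the same support and the same cardinality, hence coincide (a strictly negative proportionality factor is excluded since $F\geqslant 0$). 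This yields the stated one-to-one description of the extreme points.

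The only real obstacle is the facet analysis of $B(F)$: one must justify the restriction--contraction product $G_A=B(F_A)\times B(F^A)$ and the criterion ``base polytope full-dimensional $\iff$ ground set inseparable.'' Both are standard facts of submodular analysis and can be cited, or rederived quickly from the greedy algorithm together with $\min_{s\in B(G)}s(C)=G(W)-G(W\backslash C)$; by comparison the polarity argument and the final coordinate computation are routine.
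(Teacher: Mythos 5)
Your proposal is correct and follows essentially the same route as the paper: identify $\mathcal{U}\cap\{w^\top 1_V=0\}$ as the polar of $B(F)$ inside that hyperplane, put extreme points in bijection with facets of $B(F)$, and characterize those facets by the inseparability conditions (the paper simply cites Corollary~3.4.4 of \cite{fujishige2005submodular} for this last step, which you rederive via the restriction--contraction decomposition $G_A=B(F_A)\times B(F^A)$). The extra care you take in working inside the hyperplane $\{w^\top 1_V=0\}$, verifying $\dim B(F)=p-1$ and $0\in\operatorname{relint}B(F)$, and checking injectivity of $A\mapsto 1_A/F(A)$ is a sound filling-in of details the paper leaves implicit.
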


\begin{figure}
\begin{center}
 
 \vspace*{-.75cm}
 
%\hspace*{-.5cm}
  \hspace*{-.5cm} \includegraphics[scale=.42]{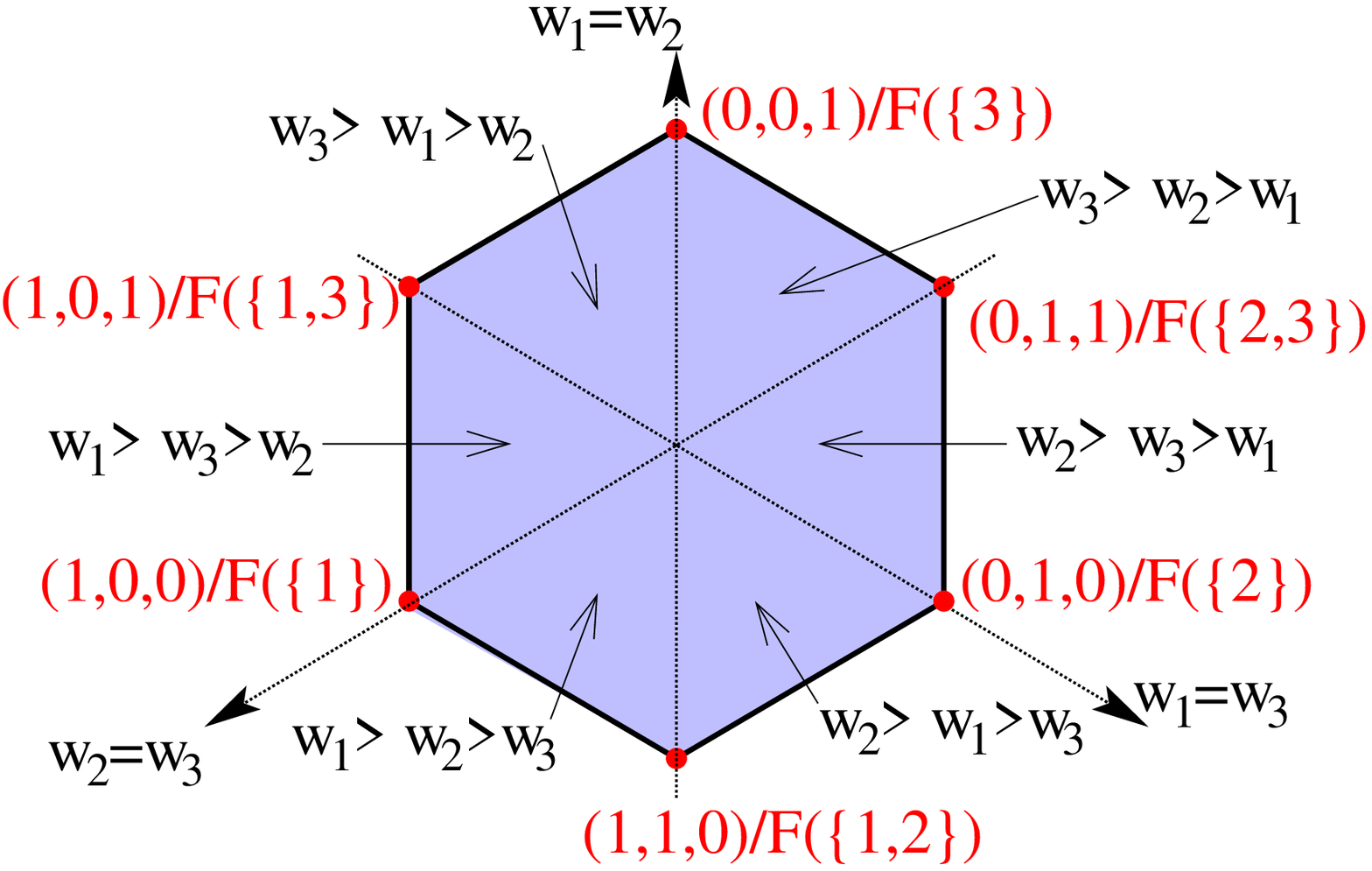}  \hspace*{-.5cm}
\includegraphics[scale=.42]{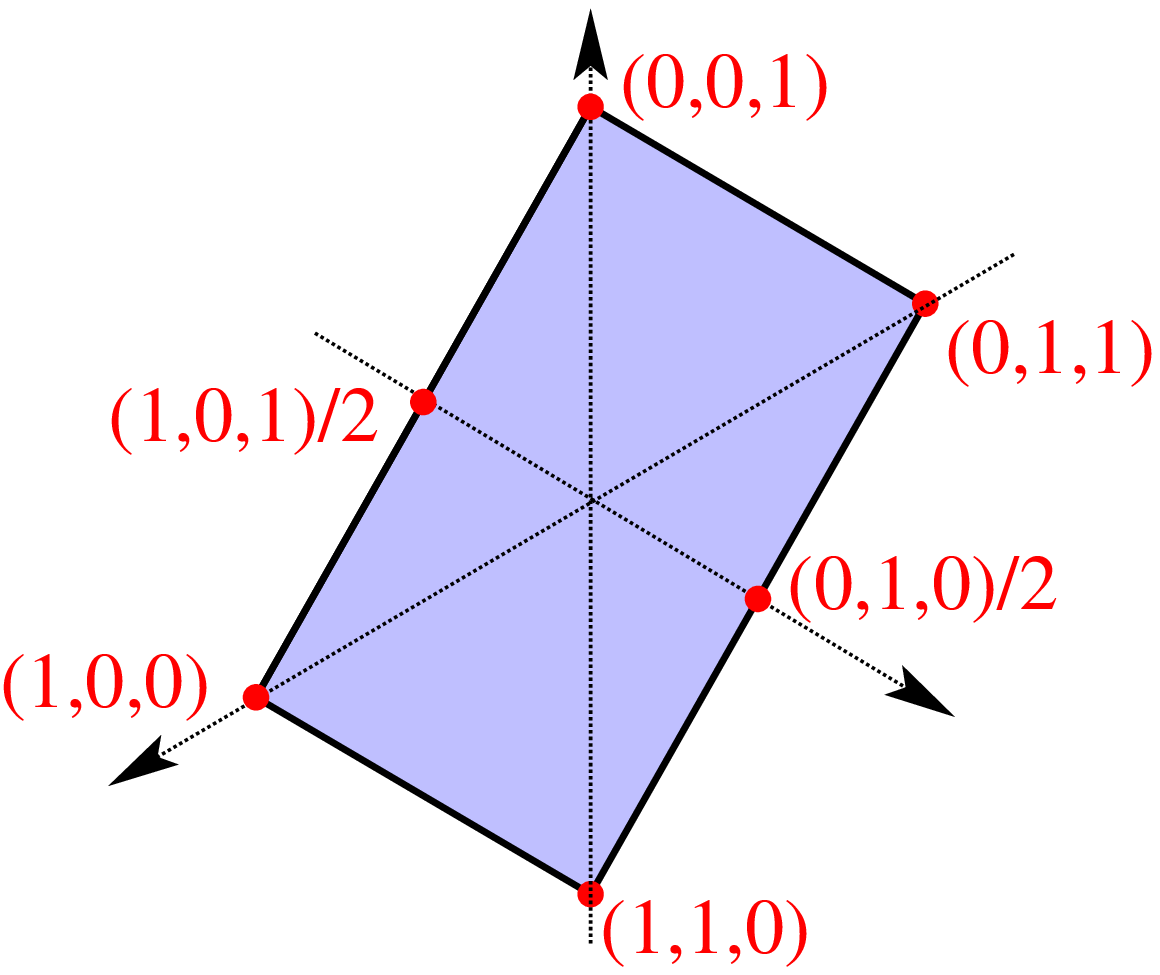}   \hspace*{-.5cm}
\end{center}

 \vspace*{-.6cm}

\caption{Top: Polyhedral level set of $f$ (projected on the set $ w^\top 1_V=0$), for 2 different submodular symmetric functions of three variables, with different inseparable sets leading to different sets of extreme points; changing values of $F$ may make some of the extreme points disappear. The various extreme points cut the space into polygons where the ordering of the component is fixed. Left: $F(A) = 1_{|A| \in \{1,2\}}$ (all possible extreme points); note that the polygon need not be symmetric in general. Right: one-dimensional total variation on three nodes, i.e., $F(A) = |1_{ 1 \in A } - 1_{ 2 \in A }| + |1_{ 2 \in A } - 1_{ 3 \in A }|$, leading to $f(w) = |w_1-w_2| + |w_2-w_3|$, for which the extreme points corresponding to the separable set $\{1,3\}$ and its complement disappear.
}
\label{fig:balls}

  \vspace*{-.25cm}

\end{figure}

\textbf{Partially ordered sets and distributive lattices.} \hspace*{.05cm}
A subset $\mathcal{D}$ of $2^V$ is a (distributive) lattice if it is invariant by intersection and union. We assume in this paper that all lattices contain the empty set $\varnothing$ and the full set $V$, and we endow the lattice with the inclusion order.
Such lattices may be represented as a \emph{partially ordered set (poset)} $\Pi(\mathcal{D}) = \{ A_1,\dots,A_m\}$ (with order relationship $\succcurlyeq$), where the sets $A_j$, $j=1,\dots,m$, form a \emph{partition} of $V$ (we always assume a topological ordering of the sets, i.e., $A_i \succcurlyeq A_j \Rightarrow i \geqslant  j$). As illustrated in \myfig{posets}, we go from $\mathcal{D}$ to $\Pi(\mathcal{D})$, by considering all maximal chains in $\mathcal{D}$ and the differences between consecutive sets. 
We go from $\Pi(\mathcal{D})$ to $\mathcal{D}$, by constructing all \emph{ideals} of $\Pi(\mathcal{D})$, i.e., sets $J$ such that if an element of $\Pi(\mathcal{D})$ is lower than an element of $J$, then it has to be in $J$~(see \cite{fujishige2005submodular} for more details, and an example in \myfig{posets}).
Distributive lattices and posets are thus in one-to-one correspondence. Throughout this section, we go back and forth between these two representations. The distributive lattice will correspond to all authorized level sets $\{ w \geqslant \alpha\}$ in a single face of $\mathcal{U}$, while the elements of the poset are the constant sets (over which $w$ is constant), with the order between the subsets giving \emph{partial} constraints between the values of the corresponding constants.

\begin{figure}
\begin{center}
 
 \vspace*{-.25cm}
 
\includegraphics[scale=.6]{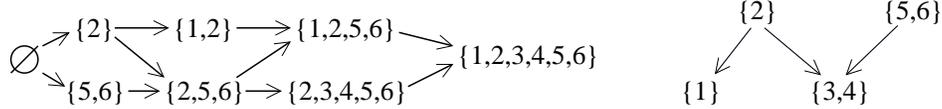}

\vspace*{-.5cm}

\end{center}
\caption{Left: distributive lattice with 7 elements in $2^{\{1,2,3,4,5,6\}}$, represented with the Hasse diagram corresponding to the inclusion order (for a partial order, a Hasse diagram connects $A$ to $B$ if $A$ is smaller than $B$ and there is no $C$ such that $A$ is smaller than $C$ and $C$ is smaller than $B$). Right: corresponding poset, with 4 elements that form a partition of $\{1,2,3,4,5,6\}$, represented with the Hasse diagram corresponding to the order $\succcurlyeq$ (a node points to its immediate smaller node according to $\succcurlyeq$). Note that this corresponds to an ``allowed''  lattice  (see Prop.~\ref{prop:faces}) for the one-dimensional total variation.
}
\label{fig:posets}

 \vspace*{-.5cm}

\end{figure}

\textbf{Faces of $\mathcal{U}$.}  \hspace*{.05cm}
The faces of $\mathcal{U}$ are characterized by lattices $\mathcal{D}$, with their corresponding posets $\Pi(\mathcal{D}) = \{ A_1,\dots,A_m\}$. We denote by $\mathcal{U}^\circ_\mathcal{D}$ (and by $\mathcal{U}_\mathcal{D}$ its closure) the set of $w \in \rb^p$ such that (a) $w$ is piecewise constant with respect to $\Pi(\mathcal{D})$, with value $v_i$ on $A_i$, and (b) for all pairs $(i,j)$, $A_i \succcurlyeq  A_j \Rightarrow v_i > v_j$. For certain lattices $\mathcal{D}$, these will be exactly the relative interiors of all faces of~$\mathcal{U}$:
\begin{proposition}[Faces of $\mathcal{U}$] 
\label{prop:faces}
The (non-empty) relative interiors of all faces of $\mathcal{U}$ are exactly of the form $\mathcal{U}^\circ_\mathcal{D}$, where $\mathcal{D}$ is a lattice such that:
\\[.1cm]
  (i) the restriction of $F$  to $\mathcal{D}$ is modular, i.e., for all $A,B \in \mathcal{D}$,
$F(A) + F(B) = F(A \cup B) + F( A \cap B)$,
\\[.1cm]
(ii) for all  $j \in \{1,\dots,m\}$, the set $A_j$ is inseparable for the function $C_j \mapsto F(B_{j-1} \cup C_j) - F(B_{j-1})$, where $B_{j-1}$ is the union of all ancestors of $A_j$ in $\Pi(\mathcal{D})$,
\\[.1cm]
(iii) among all lattices corresponding to the same \emph{unordered} partition, $\mathcal{D}$ is a maximal element of the set of lattices satisfying (i) and (ii).
\end{proposition}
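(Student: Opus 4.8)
The plan is to reduce the statement, via polarity, to a description of the faces of the base polyhedron $B(F)$, and then to read that description off the product structure of faces of base polyhedra. Since $F(A)>0$ whenever $\varnothing\neq A\neq V$, the polytope $B(F)$ is full-dimensional inside the hyperplane $H=\{s\in\rb^p:s^\top 1_V=0\}$, contains $0$ in its relative interior, and is, inside $H$, the polar of $\mathcal{U}\cap H$ (as $f=\sigma_{B(F)}$). Polarity then gives an inclusion-reversing bijection between the faces of $\mathcal{U}$ and those of $B(F)$, under which a vector $w$ with $f(w)=1$ lies in the relative interior of the face of $\mathcal{U}$ dual to a face $G$ of $B(F)$ if and only if $G=\argmax_{s\in B(F)}w^\top s$. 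By the greedy algorithm this maximizing face is $\{s\in B(F):s(A)=F(A)\text{ for every level set }A=\{w\geqslant\alpha\}\}$; in particular every face of $B(F)$ equals $B^{\mathcal{D}}(F):=\{s\in B(F):s(A)=F(A)\ \forall A\in\mathcal{D}\}$ for some chain, hence some distributive lattice, $\mathcal{D}$.

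\textbf{Faces of $B(F)$ as products.} I would first record the classical fact that for a chain $\varnothing=S_0\subsetneq\cdots\subsetneq S_k=V$, the face $\{s\in B(F):s(S_i)=F(S_i)\ \forall i\}$ is affinely isomorphic to $\prod_{i=1}^k B(F_i)$, with $F_i:C\mapsto F(S_{i-1}\cup C)-F(S_{i-1})$ the contracted set-function on $2^{S_i\setminus S_{i-1}}$ (again submodular, non-negative, vanishing at $\varnothing$ and at $S_i\setminus S_{i-1}$); the proof runs the greedy algorithm along an ordering compatible with the chain and checks that the vertex constraints decouple block by block. For a general lattice $\mathcal{D}$ with poset $\Pi(\mathcal{D})=\{A_1,\dots,A_m\}$, the point equivalent to condition (i) is that $B^{\mathcal{D}}(F)\neq\varnothing$ exactly when $F|_{\mathcal{D}}$ is modular: if $A,B\in\mathcal{D}$ are both tight for some $s\in B(F)$, so are $A\cup B$ and $A\cap B$, forcing equality throughout submodularity, so modularity is necessary; conversely, when $F|_{\mathcal{D}}$ is modular the affine hull of $B^{\mathcal{D}}(F)$ equals that of $B^{\mathcal{C}}(F)$ for any maximal chain $\mathcal{C}\subseteq\mathcal{D}$ (it has dimension $p-m$, depending only on the underlying partition), and the decomposition along $\mathcal{C}$ exhibits a point.

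\textbf{Tight-set lattice of a generic point.} Given (i), I would then upgrade the chain decomposition to one over the whole poset: $B^{\mathcal{D}}(F)$ is affinely isomorphic to $\prod_{j=1}^m B\big(F^{(j)}\big)$, where $F^{(j)}:C\mapsto F(B_{j-1}\cup C)-F(B_{j-1})$ is the contraction of $F$ on $2^{A_j}$ by the union $B_{j-1}$ of the ancestors of $A_j$ in $\Pi(\mathcal{D})$ --- modularity of $F|_{\mathcal{D}}$ is precisely what lets the equalities $s(A)=F(A)$, $A\in\mathcal{D}$, decouple over the blocks and identifies each block's marginal constraints with $B(F^{(j)})$. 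The lattice of tight sets of a generic $s\in B^{\mathcal{D}}(F)$ is then obtained by replacing each $A_j$ in $\Pi(\mathcal{D})$ with the finest partition of $A_j$ into sets inseparable for $F^{(j)}$ and taking ideals; so it equals $\mathcal{D}$ --- i.e. $\mathcal{D}$ is the lattice attached to the face $B^{\mathcal{D}}(F)$ --- if and only if every $A_j$ is itself inseparable for $F^{(j)}$, which is condition (ii). Moreover, among lattices with a fixed underlying partition and satisfying (i)--(ii), the face $B^{\mathcal{D}}(F)$ has affine hull of dimension $p-m$ depending only on the partition, so if some $\mathcal{D}'\supsetneq\mathcal{D}$ with the same partition also satisfied (i)--(ii) it would define the same face, whose generic point --- tight on all of $\mathcal{D}'$ --- would show $\mathcal{D}$ is not that face's lattice; hence $\mathcal{D}$ must be maximal, which is condition (iii). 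Conversely, if $\mathcal{D}$ satisfies (i)--(iii), then $B^{\mathcal{D}}(F)$ is a genuine face (exposed by $w_0=\sum_{A\in\mathcal{D}}1_A$, since $w_0^\top s\leqslant\sum_{A\in\mathcal{D}}F(A)$ with equality exactly on $B^{\mathcal{D}}(F)$), and its generic tight-set lattice $\mathcal{D}'\supseteq\mathcal{D}$ satisfies (i)--(ii) with the same partition as $\mathcal{D}$ by (ii), so $\mathcal{D}'=\mathcal{D}$ by (iii). This gives the bijection between faces of $B(F)$ and lattices satisfying (i)--(iii).

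\textbf{Transport to $\mathcal{U}$, and main obstacle.} Transporting back along polarity, the face of $\mathcal{U}$ dual to $B^{\mathcal{D}}(F)$ consists of the $w$ on which $w^\top s$ is constant and maximal over $B^{\mathcal{D}}(F)$; unwinding the greedy characterization these are exactly the $w$ piecewise constant along $\Pi(\mathcal{D})$ with values satisfying $A_i\succcurlyeq A_j\Rightarrow v_i\geqslant v_j$, that is $\mathcal{U}_{\mathcal{D}}$, and such a $w$ lies in the relative interior iff $\argmax_{s\in B(F)}w^\top s=B^{\mathcal{D}}(F)$ exactly, which --- using maximality in (iii) --- holds precisely when these inequalities are strict, i.e. $w\in\mathcal{U}^\circ_{\mathcal{D}}$. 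I expect the main obstacle to be the third paragraph: establishing the poset-indexed decomposition $B^{\mathcal{D}}(F)\cong\prod_j B(F^{(j)})$ rigorously --- the nontrivial inclusion being that any family of block-marginals in the $B(F^{(j)})$ assembles to a point of $B(F)$, which needs a submodularity computation powered by condition (i) --- and controlling the tight sets of a genuinely \emph{generic} point (the points of the face tight on a prescribed set not forced by it form a finite union of proper subfaces, whereas separability of some block makes \emph{every} point of the face tight on strictly more than $\mathcal{D}$).
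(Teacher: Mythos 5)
Your argument is correct, and at the top level it is the same reduction the paper uses: the paper's entire proof consists of observing that $\mathcal{U}$ and $B(F)$ are polar to each other and then citing Theorem~3.43 of Fujishige's monograph, whose three conditions are matched one-for-one with (i)--(iii). The difference is that where the paper stops at the citation, you reconstruct a proof of the cited theorem, and your reconstruction is essentially the standard one: the chain decomposition of a face of the base polyhedron into base polyhedra of contractions; modularity of $F$ on $\mathcal{D}$ as the consistency condition that makes $B^{\mathcal{D}}(F)$ nonempty and equal to $B^{\mathcal{C}}(F)$ for a maximal chain $\mathcal{C} \subseteq \mathcal{D}$; inseparability of each block as the condition that a generic point of the face acquires no tight sets beyond $\mathcal{D}$; and maximality as what singles out the tight-set lattice of the face among lattices with the same underlying partition. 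The obstacles you flag at the end are the right ones, and both are surmountable by the arguments you indicate. One point worth making explicit (the paper glosses it too): condition (ii) contracts by the union $B_{j-1}$ of the \emph{ancestors} of $A_j$ in $\Pi(\mathcal{D})$, whereas a fixed maximal chain naturally produces the contraction by the topological prefix $A_1 \cup \cdots \cup A_{j-1}$, which may be strictly larger. These two contractions do coincide on $2^{A_j}$, but the clean way to see it is that every maximal chain of $\mathcal{D}$ (in particular one in which $A_j$ immediately follows its ancestors) cuts out the same face $B^{\mathcal{D}}(F)$, so the $j$-th factors of the resulting product decompositions are equal as polytopes, and a submodular function is recovered from its base polyhedron via $G(C) = \max_{s \in B(G)} s(C)$; this should be stated rather than left implicit in the phrase ``modularity lets the equalities decouple.''
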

Among the three conditions, the second one is the easiest to interpret, as it reduces to having constant sets which are inseparable for certain submodular functions, and for cuts in an undirected graph, these will exactly be connected sets.

Since we are able to characterize \emph{all} faces of $\mathcal{U}$ (of all dimensions) with non-empty relative interior, we have a partition of the space and any $w \in \rb^p$ which is not proportional to $1_V$, will be, up to the strictly positive constant $f(w)$, in exactly one of these relative interiors of faces; we refer to this lattice as the \emph{lattice associated to} $w$. Note that from the face $w$ belongs to, we have strong constraints on the constant sets, but we may not be able to determine all level sets of $w$, because only partial constraints are given by the order on $\Pi(\mathcal{D})$. For example, in \myfig{posets}, $w_2$ may be larger or smaller than $w_5 = w_6$ (and even potentially equal, but with zero probability, see \mysec{theory}).

\section{Examples of Submodular Functions}
\label{sec:examples}

\vspace*{-.1cm}

In this section, we provide examples of submodular functions and of their \lova extensions. Some are well-known (such as cut functions and total variations), some are new in the context of supervised learning (regular functions), while some have interesting effects in terms of clustering or outlier detection (cardinality-based functions).

\textbf{Symmetrization.} \hspace*{.05cm}
From any submodular function $G$, one may define
$F(A) = G(A) + G( V \backslash A) - G(\varnothing) - G(V)$, which is symmetric. Potentially interesting examples which are beyond the scope of this paper are mutual information, or functions of eigenvalues of submatrices~\cite{bach2010structured}.

\textbf{Cut functions.} \hspace*{.05cm}
\label{sec:cuts}
 Given a set of \emph{nonnegative} weights $d:V \times V \to \rb_+$, define  the cut
$F(A) = \sum_{k \in A, j \in V \backslash A} d(k,j)$.  The \lova extension is equal to 
$f(w) = \sum_{k,j \in V} d(k,j) ( w_k - w_j )_+$ (which shows submodularity because $f$ is convex), and is often referred to as the total variation. If  the weight function $d$ is symmetric, then the submodular function is also symmetric. In this case, it can be shown that inseparable sets for  functions $A \mapsto F(A \cup B) - F(B)$ are exactly \emph{connected} sets. Hence, constant sets are connected sets, which is the usual justification behind the total variation. Note however that some configurations of connected sets are not allowed due to the other conditions in Prop.~\ref{prop:faces} (see examples in \mysec{theory}).
 In \myfig{speed} (right plot), we give an example of the usual chain graph, leading to the one-dimensional total variation~\cite{tibshirani2005sparsity,chambolle2009total}.  
Note that these functions can be extended to cuts in hypergraphs, which may have interesting applications in computer vision~\cite{boykov2001fast}. Moreover, directed cuts may be interesting to favor increasing or decreasing jumps along the edges of the graph.

\begin{figure}
\begin{center}

 \vspace*{-.25cm}
 
\hspace*{-1.25cm}
\parbox[b]{10.5cm}{
\includegraphics[scale=.26]{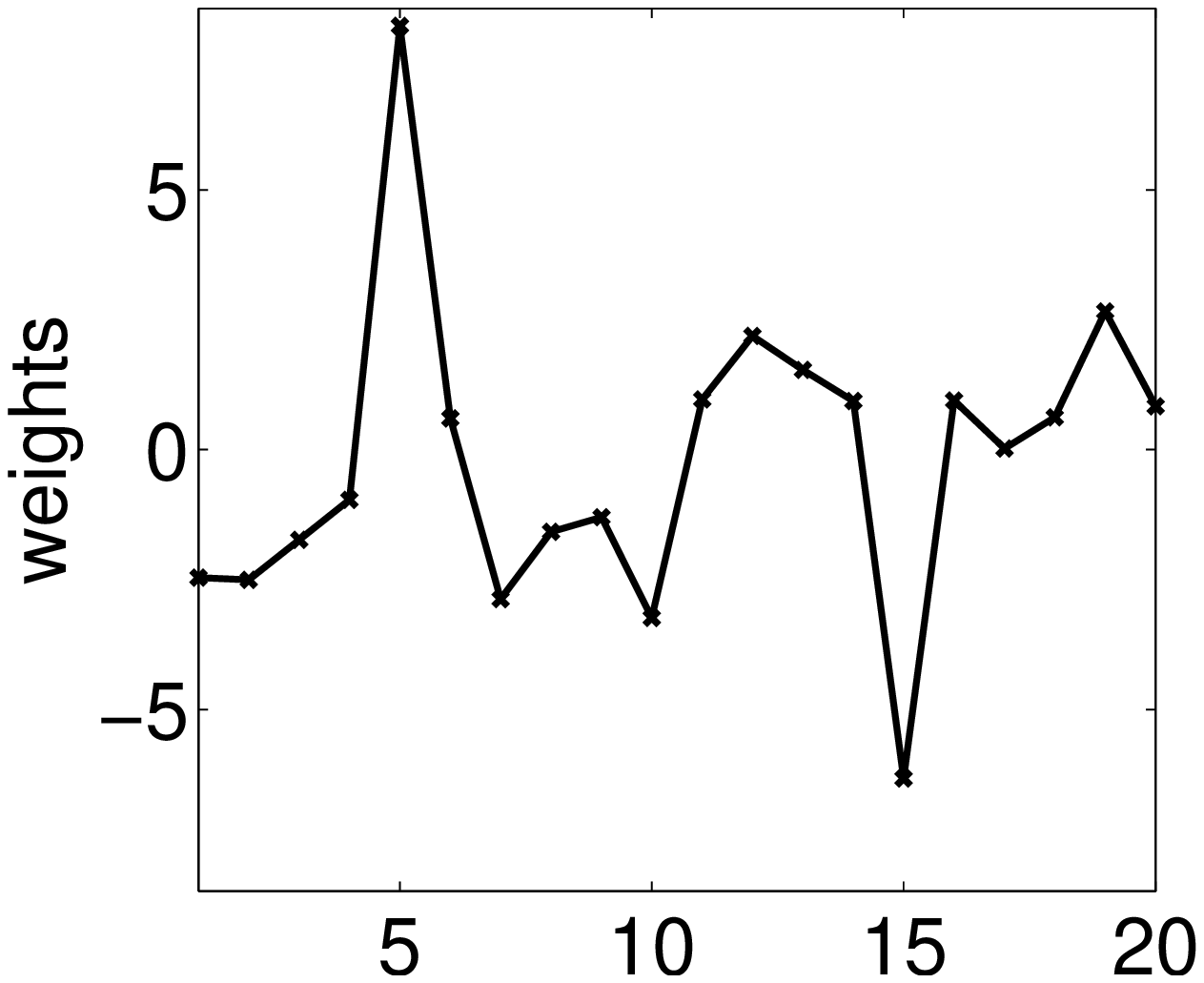} \hspace*{-.2cm}
\includegraphics[scale=.26]{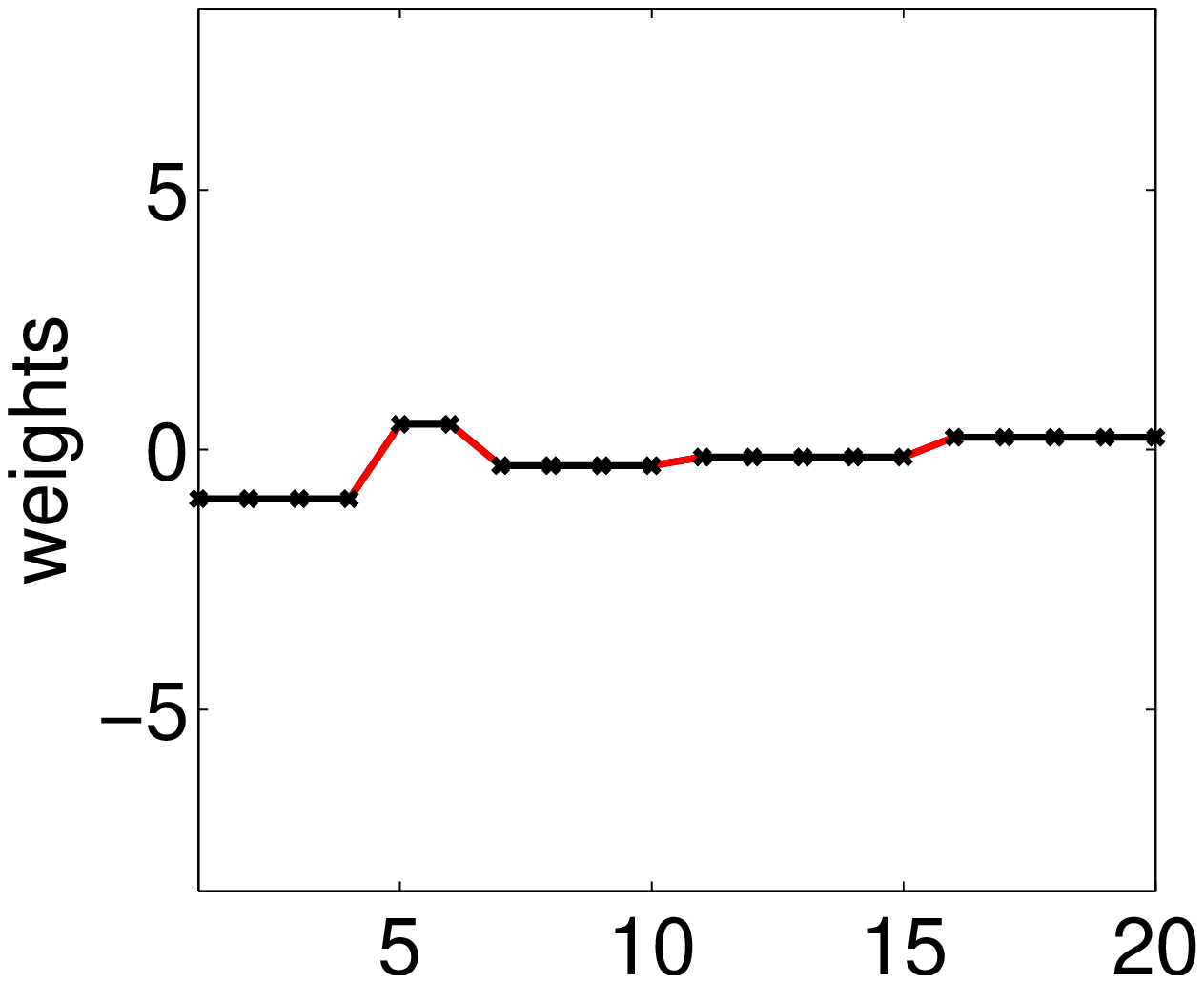} \hspace*{-.2cm}
\includegraphics[scale=.26]{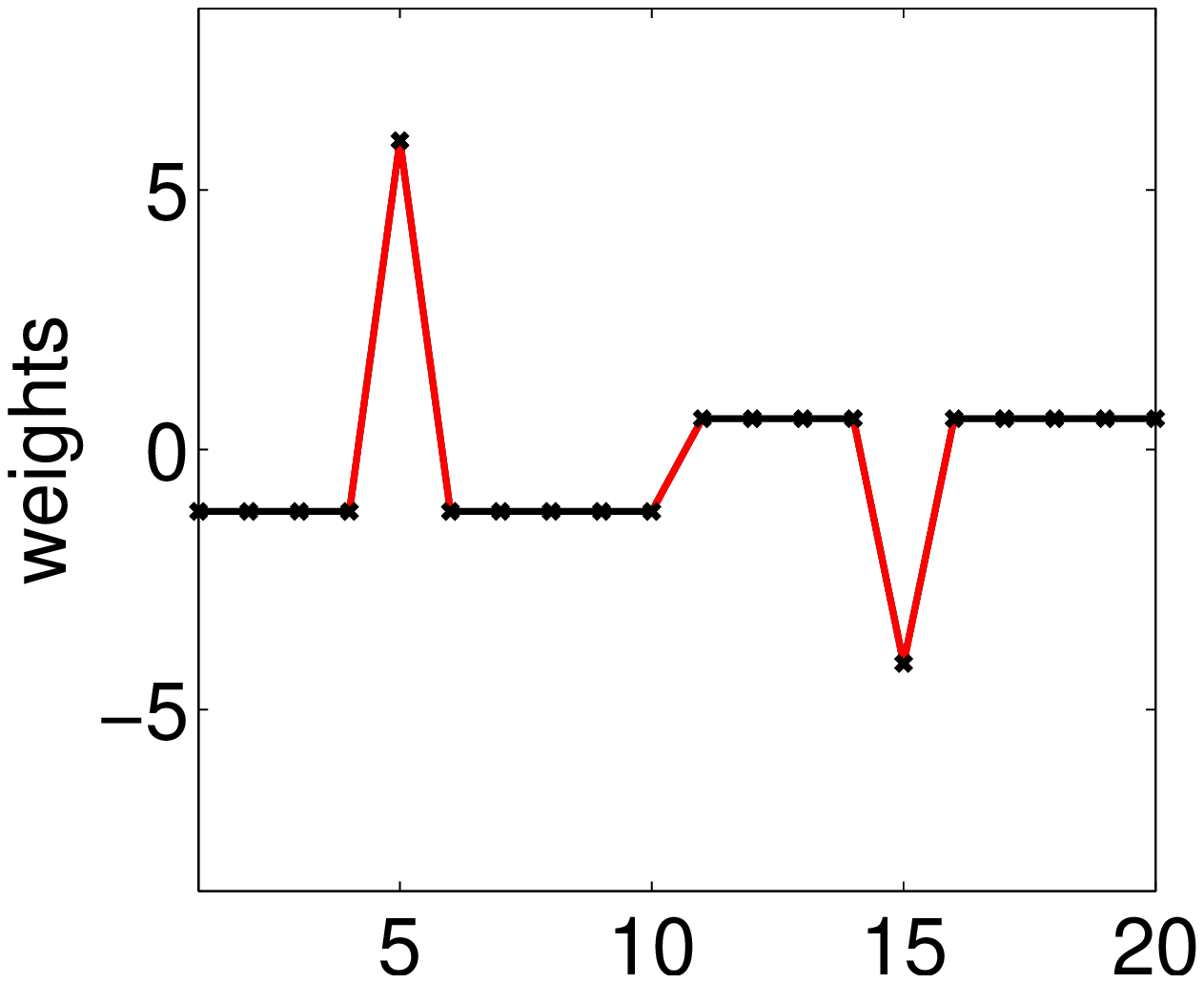}

\vspace*{.25cm}} 
\includegraphics[scale=.38]{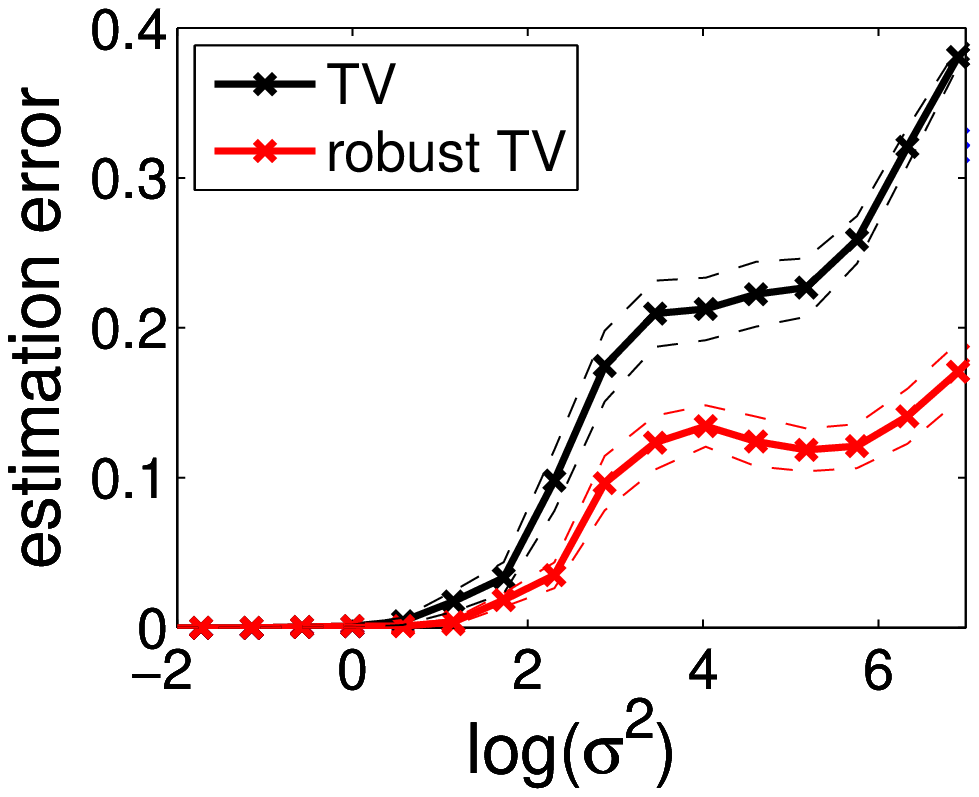} 
\hspace*{-.95cm}

\end{center}

\vspace*{-.6cm}

\caption{\textbf{Three left plots}: Estimation of noisy piecewise constant 1D signal with outliers (indices 5 and 15 in the chain of 20 nodes). Left: original signal. Middle: best estimation with total variation (level sets are not correctly estimated). Right: best estimation with the  \emph{robust} total variation based on noisy cut functions (level sets are correctly estimated, with less bias and with detection of outliers).  \textbf{Right plot}: clustering estimation error vs. noise level, in a sequence of 100 variables, with a single jump, where noise of variance one is added, with $5\%$ of outliers (averaged over 20 replications).}
\label{fig:noisycuts}

 \vspace*{-.25cm}
 
\end{figure}

\textbf{Regular functions and robust total variation.} \hspace*{.05cm}
\label{sec:regular}
By partial minimization, we obtain so-called \emph{regular functions}~\cite{boykov2001fast, chambolle2009total}. One application is ``noisy cut functions'':
  for a given  weight function $d: W \times W \to \rb_+$,  where each node in $W$ is uniquely associated in a node in $V$, we consider the submodular function obtained as the minimum cut adapted to $A$ in the augmented graph (see right plot of \myfig{speed}):
$
F(A) =  \min_{B \subset W} \  \sum_{k \in B, \ j \in W \backslash B} d(k,j) + \lambda | A \Delta B|$.
 This allows for robust versions of cuts, where some gaps may be tolerated. See examples in \myfig{noisycuts}, illustrating the behavior of the type of graph displayed in the bottom-right plot of \myfig{speed}, where the performance of the robust total variation is significantly more stable in presence of outliers.

\textbf{Cardinality-based functions.} \hspace*{.05cm}
\label{sec:card}
For $F(A) = h(|A|)$ where $h$ is such that $h(0)=h(p)=0$ and $h$ concave, we obtain a submodular function, and a \lova extension that depends on the order statistics of $w$, i.e., if $w_{j_1} \geqslant \dots \geqslant w_{j_p}$, then $f(w) = \sum_{k=1}^{p-1} h(k) ( w_{j_k} - w_{j_{k+1}})$. While these examples do not provide significantly different behaviors for the non-decreasing submodular functions explored by~\cite{bach2010structured} (i.e., in terms of \emph{support}), they lead to interesting behaviors here in terms of \emph{level sets}, i.e., they will make the components $w$ cluster together in specific ways.  Indeed, as shown in \mysec{theory}, allowed constant sets $A$ are such that $A$ is inseparable for the function $C \mapsto h(|B \cup C|) - h(|B|)$ (where $B \subset V$ is the set of components with higher values than the ones in $A$), which imposes that  the concave function $h$ is not linear on $[|B|,|B|\!+\!|A|]$. We consider the following examples:

\vspace*{-.1cm}

\BNUM
\setlength{\leftmargin}{0pt}
\item $F(A) = |A| \cdot | V \backslash A| $, leading to
$f(w) = \sum_{i,j=1}^p |w_i - w_j|$. This function can thus be also seen as the cut in the fully connected graph. All patterns of level sets are allowed as the function~$h$ is strongly concave (see left plot of \myfig{card}). This function has been extended in~\cite{toby} by considering situations where each $w_j$ is a vector, instead of a scalar, and replacing the absolute value $| w_i - w_j|$ by any norm $\| w_i - w_j\|$, leading to convex formulations for clustering.

\vspace*{-.05cm}

\item $F(A) =  1$  if $A \neq \varnothing$ and $A \neq V$, and $0$ otherwise, leading to
$f(w) = \max_{i,j} | w_i - w_j|$. Two large level sets at the top and bottom, all the rest of the variables are in-between and separated (\myfig{card}, second plot from the left).

\vspace*{-.05cm}

\item $F(A) =  \max\{ |A|,  | V \backslash A| \} $. This function is piecewise affine, with only one kink, thus only one level set of cardinalty greater than one (in the middle) is possible, which is observed in \myfig{card} (third plot from the left). This may have applications to multivariate outlier detection by considering extensions similar to~\cite{toby}.

%\vspace*{-.05cm}

%

%\item Other piecewise affine functions lead  to selecting a given number of large level sets. For example, if $\mathcal{K} \subset \{1,\dots,p\}$, then the function 
%$\min_{ k \in \mathcal{K}} k(p-k) + ( |A| - k) (p-2k)$---which is the concave piecewise affine upper approximation of $A \mapsto |A| ( p - |A|)$---leads to selecting at most $|\mathcal{K}|$ sets of cardinality strictly greater than one (see right plot of \myfig{card}).

\ENUM

\vspace*{-.195cm}

\section{Optimization Algorithms} 
\label{sec:algo}
\label{sec:algorithms}
\label{sec:opt}

\vspace*{-.15cm}

In this section, we present optimization methods for minimizing convex objective functions regularized by the \lova extension of a submodular function. These lead to convex optimization problems, which we tackle using proximal methods~(see, e.g., \cite{beck2009fast}). We first start by mentioning that subgradients may easily be derived (but subgradient descent is here rather inefficient as shown in \myfig{speed}).  Moreover, note that with the square loss, the regularization paths are piecewise affine, as a direct consequence of regularizing by a polyhedral function.

\textbf{Subgradient.} \hspace*{.05cm}
From $f(w) = \max_{ s \in B(F)} s^\top w$ and the greedy algorithm\footnote{The greedy algorithm to find extreme points of the base polyhedron should not be confused with the greedy algorithm (e.g., forward selection) that is common in supervised learning/statistics.} presented in \mysec{submodular}, one can easily get in \emph{polynomial time} one subgradient as one of the maximizers $s$. This allows to use subgradient descent, with slow convergence compared to proximal methods (see \myfig{speed}).

\begin{figure}
\begin{center}

 \vspace*{-.25cm}
 
\hspace*{-2.2cm}
\includegraphics[scale=.28]{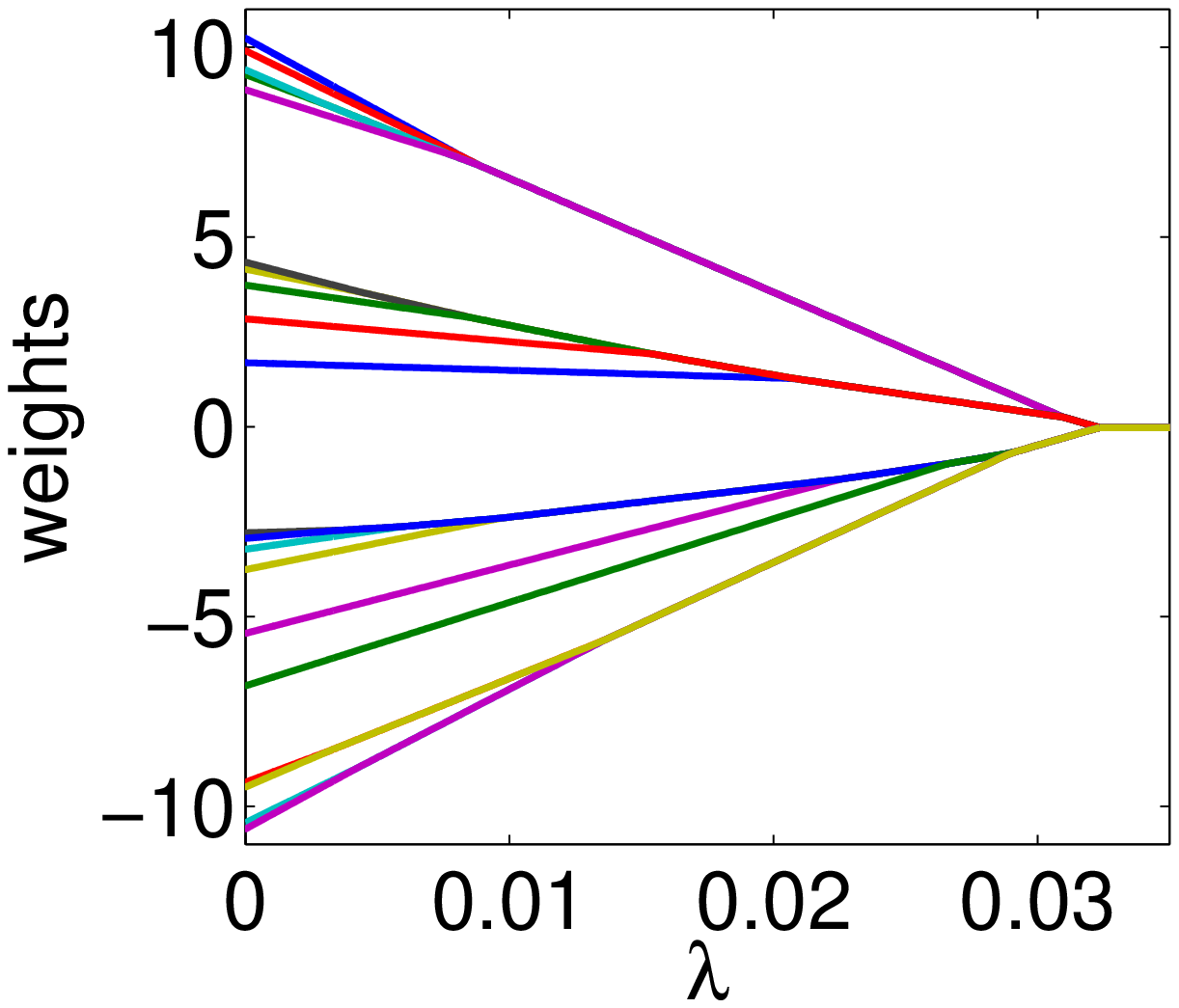}  
\includegraphics[scale=.28]{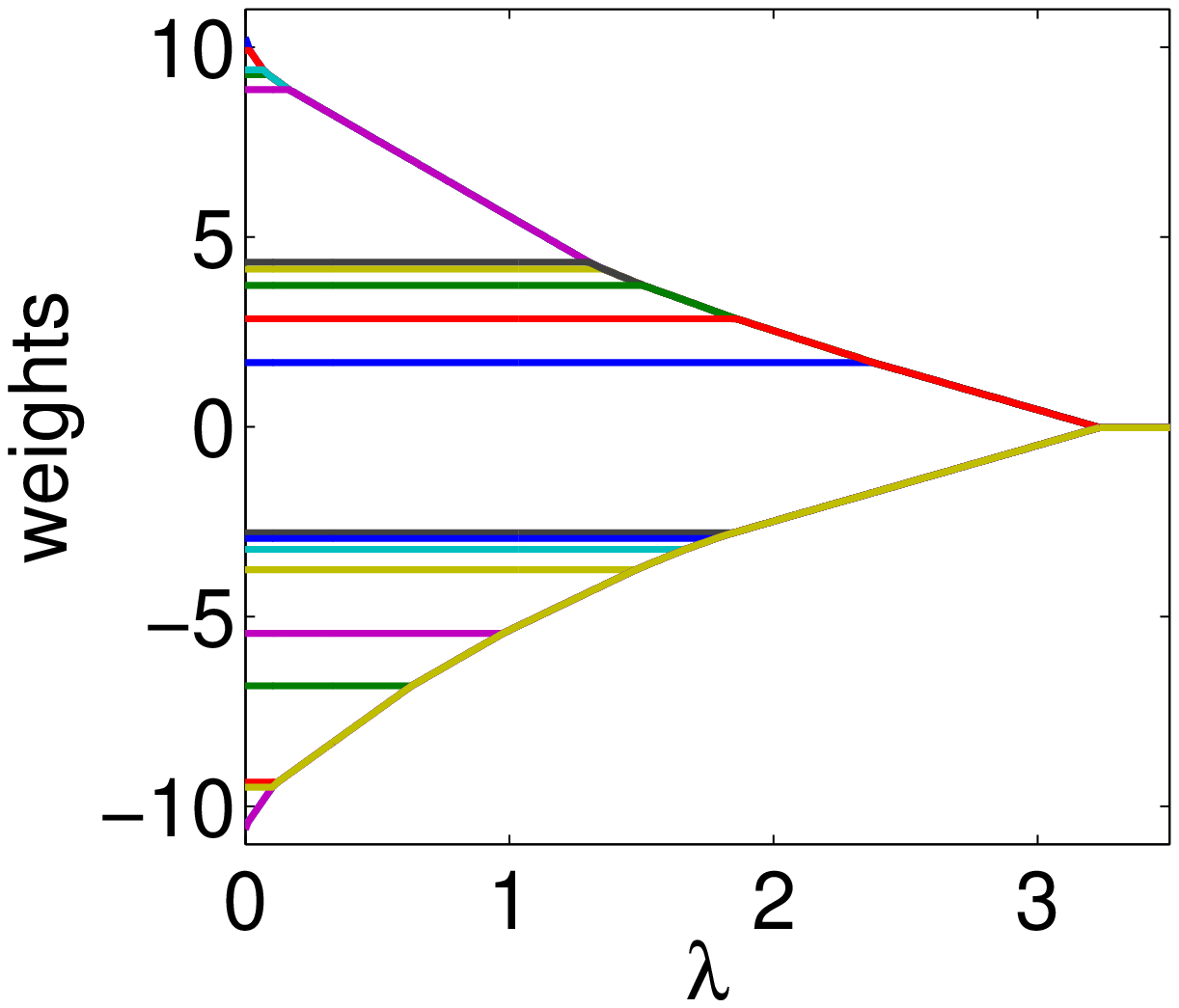}  
\includegraphics[scale=.28]{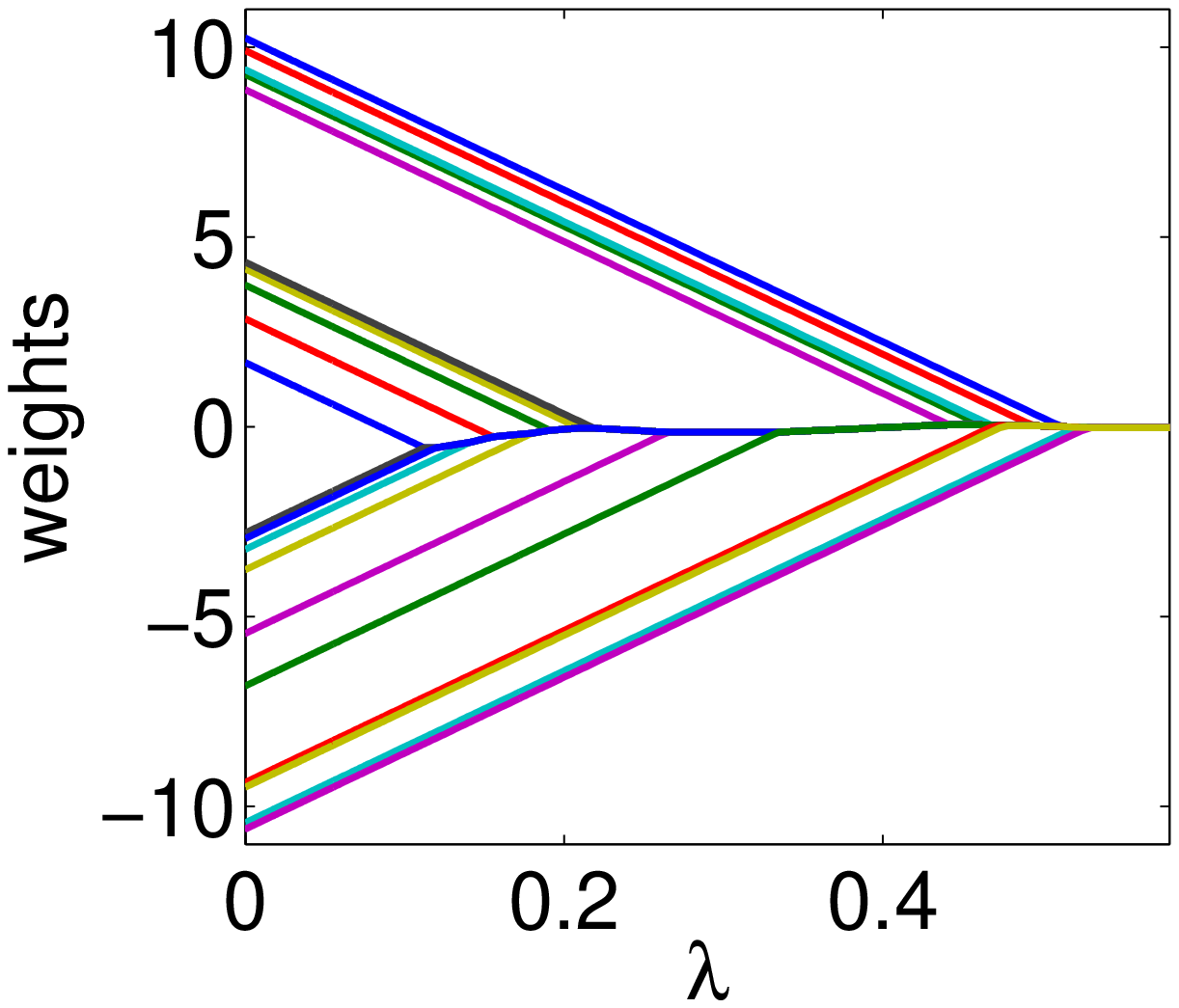} 
\includegraphics[scale=.28]{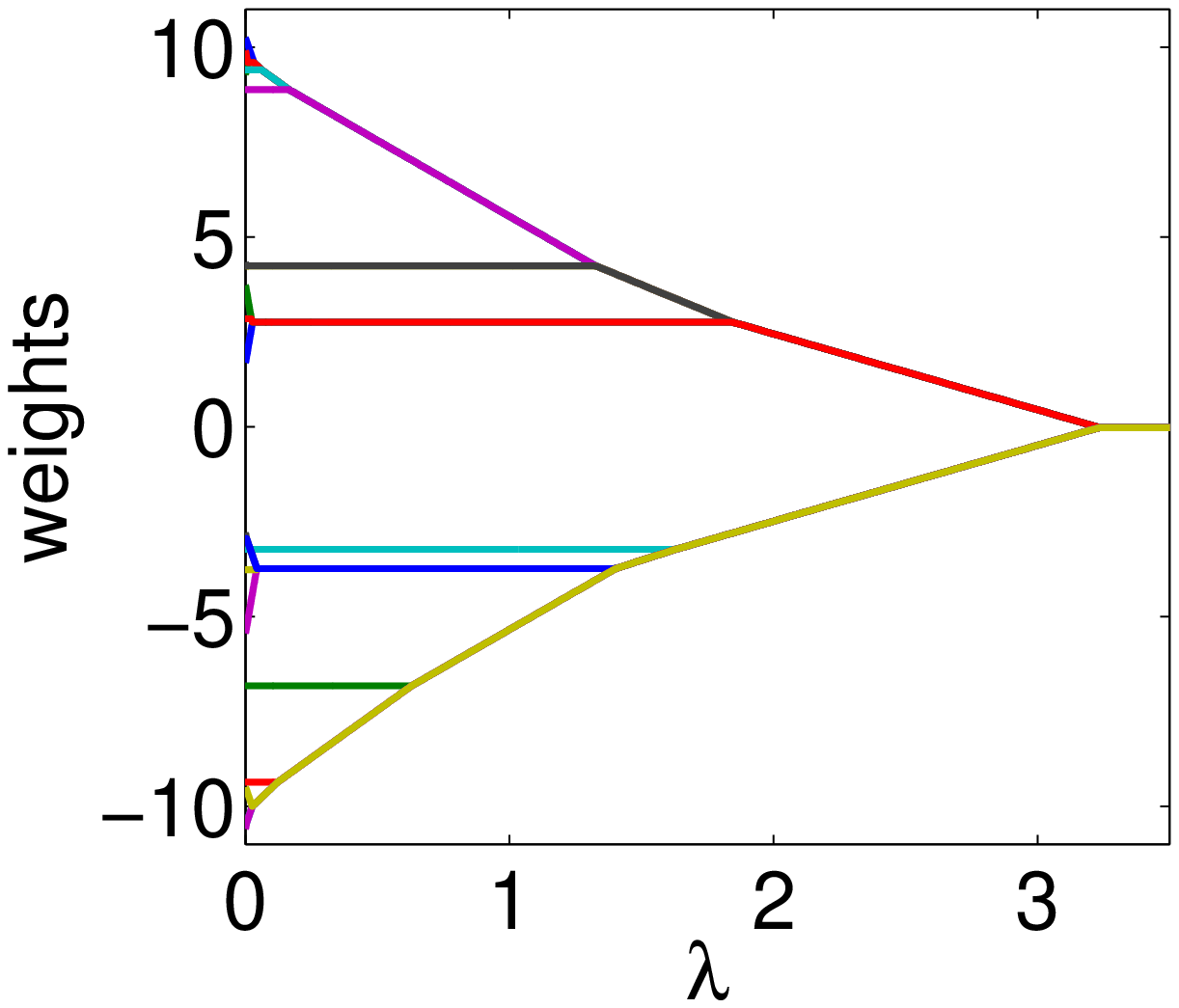} 
\hspace*{-1.5cm}
%
%\hspace*{-2.2cm}
%\includegraphics[scale=.28]{icml_card_quad_nonorth.eps}  
%\includegraphics[scale=.28]{icml_card_max_nonorth.eps}  
%\includegraphics[scale=.28]{icml_card_linear_nonorth.eps} 
%\includegraphics[scale=.28]{icml_card_piecewise_nonorth.eps} 
%\hspace*{-1.5cm}

\end{center}

\vspace*{-.5cm}

\caption{\textbf{Left:} Piecewise linear regularization paths of  proximal problems (\eq{prox}) for different functions of cardinality. From left to right: quadratic function (all level sets allowed),  second example in \mysec{card}  (two large level sets at the top and bottom),   piecewise linear with two pieces (a single large level set in the middle). \textbf{Right:} Same plot for the one-dimensional total variation.  Note that in both cases the regularization paths for orthogonal designs are \emph{agglomerative} (see \mysec{opt}), while for general designs, they would still be piecewise affine but not agglomerative. }
\label{fig:card}

\end{figure}

\textbf{Proximal problems through sequences of submodular function minimizations (SFMs).} \hspace*{.05cm}
Given regularized problems of the form $\min_{w \in \rb^p} L(w) + \lambda f(w)$, where $L$ is differentiable with Lipschitz-continuous gradient, \emph{proximal methods} have been shown to be particularly efficient first-order methods~(see, e.g.,~\cite{beck2009fast}). In this paper, we use the method ``ISTA'' and its accelerated variant ``FISTA''~\cite{beck2009fast}.
To apply these methods, it suffices to be able to
solve efficiently: 
\BEQ
\label{eq:prox}
\min_{ w \in \rb^p} \textstyle \frac{1}{2} \| w - z \|_2^2 + \lambda f(w),
\EEQ
 which we refer to as the \emph{proximal problem}. It is known that solving the proximal problem is related to  submodular function minimization (SFM).
More precisely, the minimum of $A \mapsto \lambda F(A) - z(A)$ may be obtained by selecting negative components of the solution of a single proximal problem~\cite{fujishige2005submodular,submodular_tutorial}. Alternatively, the solution of the proximal problem may be obtained by a sequence of at most~$p$ submodular function minimizations of the form $A \mapsto \lambda F(A) - z(A)$, by a decomposition algorithm adapted from~\cite{groenevelt1991two}, and described in~\cite{submodular_tutorial}.

Thus, computing the proximal operator has polynomial complexity since SFM has polynomial complexity. However, it may be  too slow for practical purposes, as the best \emph{generic} algorithm has complexity $O(p^6)$~\cite{orlin2009faster}\footnote{Note that even in the case of symmetric submodular functions, where more efficient algorithms  in $O(p^3)$ for submodular function minimization (SFM) exist~\cite{queyranne1998minimizing}, the minimization of functions of the form $\lambda F(A) - z(A)$ is provably as hard as general SFM~\cite{queyranne1998minimizing}.}. Nevertheless, this strategy is efficient
 for families of submodular functions for which dedicated fast algorithms exist:
\BIT
\item[--] \textbf{Cuts}: Minimizing the cut or the partially minimized cut, plus a modular function, may be done with a min-cut/max-flow algorithm~(see, e.g., \cite{boykov2001fast,chambolle2009total}). 
 For proximal methods,  we need in fact to solve an instance of a \emph{parametric max-flow} problem, which may be done using other efficient dedicated algorithms~\cite{gallo1989fast,chambolle2009total} than the decomposition algorithm derived from~\cite{groenevelt1991two}.
\item[--] \textbf{Functions of cardinality}: minimizing functions of the form $A \mapsto \lambda F(A) - z(A)$ can be done in closed form by sorting the elements of $z$.
\EIT

\textbf{Proximal problems through minimum-norm-point algorithm.} \hspace*{.05cm}
In the \emph{generic} case (i.e., beyond cuts and cardinality-based functions), we can follow~\cite{bach2010structured}: since $f(w)$ is expressed as a minimum of linear functions, the problem reduces to the projection on the polytope $B(F)$, for which we happen to be able to easily  maximize linear functions (using the greedy algorithm described in \mysec{submodular}). This  can be tackled efficiently by the minimum-norm-point algorithm~\cite{fujishige2005submodular}, which iterates between orthogonal projections on affine subspaces and the greedy algorithm for the submodular function\footnote{Interestingly, when used for submodular function minimization (SFM), the minimum-norm-point algorithm has no complexity bound but is empirically faster than algorithms with such bounds~\cite{fujishige2005submodular}.}. We compare all optimization methods on synthetic examples in \myfig{speed}.

\textbf{Proximal path as agglomerative clustering.} \hspace*{.05cm}
When $\lambda$ varies from zero to $+\infty$, then the unique optimal solution of \eq{prox} goes from $z$ to a constant. We now provide conditions under which the regularization path of the proximal problem may be obtained by agglomerative clustering (see examples in \myfig{card}):
\begin{proposition}[Agglomerative clustering]
\label{prop:agglo}
Assume that for all sets $A,B$ such that $B \cap A = \varnothing$ and  $A$ is inseparable for $D \mapsto F(B \cup D) - F(B)$, we have:

\vspace*{-.35cm}

\BEQ
\label{eq:agglo}
\textstyle \forall C \subset A, \ \frac{|C|}{|A|} [ F( B \cup A) - F(B) ]
\leqslant   F( B \cup C) - F(B) .
\EEQ
 
\vspace*{-.2cm}

Then the   regularization path for \eq{prox} is \emph{agglomerative}, that is, if two variables are in the same constant for a certain $\mu \in \rb_+$, so are they for all larger $\lambda \geqslant \mu$.
\end{proposition}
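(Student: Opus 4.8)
The plan is to start from the classical description of the proximal solution $w(\lambda)$ of~\eq{prox} through parametric submodular minimization, to extract from it that the value of $w(\lambda)$ on a constant set is an \emph{affine} function of $\lambda$, and then to rule out the only event incompatible with agglomeration---a constant set splitting as $\lambda$ grows---using hypothesis~\eq{agglo} to provide the needed monotonicity in $\lambda$.

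First I would recall the optimality conditions. Since $\frac12\|w-z\|_2^2+\lambda f(w)=\lambda\big[f(w)+\sum_k\tfrac1{2\lambda}(w_k-z_k)^2\big]$, the standard link between separable problems and submodular minimization~\cite{fujishige2005submodular,submodular_tutorial} characterizes the (unique, by strong convexity) minimizer $w(\lambda)$ by: for every $\alpha\in\rb$, the set $\{w(\lambda)>\alpha\}$ is the smallest, and $\{w(\lambda)\geqslant\alpha\}$ the largest, minimizer over $2^V$ of the submodular function $A\mapsto\lambda F(A)-z(A)+\alpha|A|$; equivalently $w(\lambda)_k=\sup\{\alpha:\ k\in A^\star_\lambda(\alpha)\}$ with $A^\star_\lambda(\alpha)$ the smallest such minimizer. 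Combined with the continuity and piecewise-affineness of $\lambda\mapsto w(\lambda)$ (noted in the text), this gives: if $A$ is a constant set of $w(\lambda)$ with value $v$ and $B:=\{w(\lambda)>v\}$ is the level set immediately above it, then writing $w(\lambda)=z-\lambda s$ with $s\in\partial f(w(\lambda))$ (so $s\in B(F)$ is tight on the level sets $B$ and $B\cup A$ of $w(\lambda)$, whence $s(A)=F(B\cup A)-F(B)$) one obtains the affine formula
\[
v\;=\;v_A(\lambda)\;:=\;\tfrac1{|A|}\big(z(A)-\lambda\,[F(B\cup A)-F(B)]\big);
\]
moreover $A$ is inseparable for $G:\ D\mapsto F(B\cup D)-F(B)$ on $2^A$ (this is what Prop.~\ref{prop:faces}(ii) yields for a constant set, once $B$ is identified with the level set immediately above). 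It therefore suffices to show that no constant set ever splits as $\lambda$ increases, since then the constant-set partition only coarsens and every constant set at $\mu$ is contained in one at each $\lambda\geqslant\mu$.

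Then I would argue by contradiction. Suppose $A^+$ is a constant set of $w(\lambda)$ for $\lambda$ slightly below some $\lambda_0$ but that $w(\lambda)$ is not constant on $A^+$ for $\lambda$ slightly above $\lambda_0$. Let $B^+:=\{w(\lambda_0)>v_{A^+}(\lambda_0)\}$ (stable near $\lambda_0$, since the blocks above $A^+$ keep strictly larger values), $G(D):=F(B^+\cup D)-F(B^+)$ for $D\subseteq A^+$, and let $P$ be the union of those indices of $A^+$ whose $w(\lambda)$-value exceeds $v_{A^+}(\lambda)$ for $\lambda$ slightly above $\lambda_0$; then $\varnothing\subsetneq P\subsetneq A^+$ ($P\neq\varnothing$ since $A^+$ is no longer constant, $P\neq A^+$ since $v_{A^+}(\lambda)$ is the weighted average of the pieces' values). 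Set
\[
\phi_P(\lambda)\;:=\;\lambda\,G(P)-z(P)+v_{A^+}(\lambda)\,|P|\;=\;\lambda\Big(G(P)-\tfrac{|P|}{|A^+|}\,G(A^+)\Big)\;-\;\Big(z(P)-\tfrac{|P|}{|A^+|}\,z(A^+)\Big),
\]
an affine function of $\lambda$ whose slope $G(P)-\tfrac{|P|}{|A^+|}G(A^+)$ is $\geqslant0$ \emph{by hypothesis~\eq{agglo}} applied with $(A,B,C)=(A^+,B^+,P)$ (legitimate because $A^+$ is inseparable for $G$). Now the two regimes clash: (a) for $\lambda$ slightly below $\lambda_0$, optimality of $w(\lambda)$ makes $\{w(\lambda)>v_{A^+}(\lambda)\}=B^+$ a minimizer of $A\mapsto\lambda F(A)-z(A)+v_{A^+}(\lambda)|A|$, and comparing its value with that of $B^+\cup P$ gives $\phi_P(\lambda)\geqslant0$; by continuity $\phi_P(\lambda_0)\geqslant0$, hence by the non-negative slope $\phi_P(\lambda)\geqslant0$ for all $\lambda\geqslant\lambda_0$. (b) For $\lambda$ slightly above $\lambda_0$, by construction $\{w(\lambda)>v_{A^+}(\lambda)\}=B^+\cup P$, which is therefore the smallest minimizer of $A\mapsto\lambda F(A)-z(A)+v_{A^+}(\lambda)|A|$; since $B^+\subsetneq B^+\cup P$, the set $B^+$ is not a minimizer, so the inequality is strict, i.e.\ $\phi_P(\lambda)<0$. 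Items (a) and (b) are contradictory, so no constant set splits, which is the claim.

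The inequality manipulations are light: \eq{agglo} is invoked exactly once, to obtain ``slope $\geqslant0$,'' which is precisely what upgrades ``$\phi_P\geqslant0$ at $\lambda_0$'' to ``$\phi_P\geqslant0$ for $\lambda\geqslant\lambda_0$.'' The real work, and the main obstacle, is the bookkeeping: (i) the reduction to ``no constant set splits'' via continuity and piecewise-affineness of $\lambda\mapsto w(\lambda)$, including the handling of the measure-zero degeneracies discussed around Prop.~\ref{prop:faces} (several constant sets sharing one value exactly at a breakpoint); and (ii) justifying the value formula $v_{A^+}(\lambda)$ and, above all, that a constant set $A^+$ is inseparable for $D\mapsto F(B^+\cup D)-F(B^+)$ with $B^+$ the level set \emph{immediately above} $A^+$ (not merely the union of its ancestors in $\Pi(\mathcal{D})$), which is what permits \eq{agglo} to be used in the form required---this follows from Prop.~\ref{prop:faces}, essentially because if $A^+$ separated for that function then $B^+\cup P$ would be tight for every certificate $s\in\partial f(w(\lambda))$ and hence an element of the lattice $\mathcal{D}$ associated to $w(\lambda)$, contradicting that $A^+$ is a single block of $\Pi(\mathcal{D})$.
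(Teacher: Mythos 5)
Your argument is correct in substance, but it follows a genuinely different route from the paper's. The paper first proves an explicit optimality certificate for a given lattice (Lemma~\ref{lemma:opt}): the lattice $\mathcal{D}$ is optimal iff the ordering constraints on $v=(M^\top M)^{-1}(M^\top z-\lambda t)$ hold and $\frac{1}{\lambda}(\idm-M(M^\top M)^{-1}M^\top)z+M(M^\top M)^{-1}t\in B(F)$. It then shows that \eq{agglo}, combined with a modularity-plus-submodularity telescoping over the blocks, implies $M(M^\top M)^{-1}t\in B(F)$, so that for $\lambda\geqslant\mu$ the certificate is a convex combination of two points of $B(F)$ and the dual feasibility condition can never break --- only the ordering constraints can, which forces a merge. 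You instead work through the parametric-SFM characterization of the proximal solution and run a local contradiction at a hypothetical splitting value $\lambda_0$: your quantity $\phi_P(\lambda)=H_\lambda(B^+\cup P)-H_\lambda(B^+)$ is exactly the paper's dual feasibility inequality specialized to sets of the form $B^+\cup P$ with $P\subset A^+$, and your ``slope $\geqslant 0$'' step is the same single use of \eq{agglo}; what you gain is that you never need the full membership in $B(F)$ (hence no telescoping over all $C\subset V$), what you pay is heavier continuity bookkeeping at the breakpoint (stability of $B^+$, identification of $v_{A^+}(\lambda)$ as the block average via tightness of $s$ on the level sets $B^+$ and $B^+\cup A^+$, nondegeneracy of $P$). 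Two remarks. First, the minimal/maximal-minimizer dichotomy you use to get strictness in step (b) is the right tool, but it is cleaner to take $A^+$ to be the constant set at $\lambda_0$ itself (where $i$ and $j$ are still together, by closedness of $\{w_i=w_j\}$) rather than ``slightly below,'' which avoids having to match the block across the breakpoint. Second, your worry about inseparability of $A^+$ with respect to the level set immediately above (rather than the union of ancestors in $\Pi(\mathcal{D})$) is legitimate, but note that the paper's own proof makes exactly the same identification when it invokes Prop.~\ref{prop:faces} with $B_{i-1}=A_1\cup\cdots\cup A_{i-1}$; your tightness argument for why separability of $A^+$ would contradict $A^+$ being a single block is the correct justification, so you are not missing anything the paper supplies.
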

As shown in the supplementary material, the assumptions required for by Prop.~\ref{prop:agglo} are satisfied by (a) all submodular set-functions that only depend on the cardinality, and (b) by the one-dimensional total variation---we thus recover and extend known results from~\cite{harchaoui2008catching,hoefling910path,toby}.

\textbf{Adding an $\ell_1$-norm.} \hspace*{.05cm}
Following~\cite{tibshirani2005sparsity}, we may add the $\ell_1$-norm $\| w\|_1$ for additional sparsity of $w$ (on top of shaping its level sets). The following proposition extends the result for the one-dimensional total variation~\cite{tibshirani2005sparsity,mairal2010online} to all submodular functions and their \lova extensions:
\begin{proposition}[Proximal problem for $\ell_1$-penalized problems]
\label{prop:proxL1}
The unique minimizer of $\frac{1}{2} \| w- z \|_2^2 + f(w) + \lambda \| w\|_1$  may be obtained by soft-thresholding
the minimizers of $\frac{1}{2} \| w- z  \|_2^2 + f(w)$. That is, the proximal operator for $f + \lambda \| \cdot \|_1$ is equal to the composition of the proximal operator for $f$ and the one for $\lambda \| \cdot \|_1$.
\end{proposition}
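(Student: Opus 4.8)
The plan is to check directly that soft-thresholding the unique minimizer of the $f$-regularized proximal problem satisfies the optimality condition of the $(f+\lambda\|\cdot\|_1)$-regularized one. Write $\hat w$ for the minimizer of $\frac12\|w-z\|_2^2+f(w)$ and $\tilde w$ for its componentwise soft-thresholding at level $\lambda$, i.e.\ $\tilde w_k=\sign(\hat w_k)\max\{|\hat w_k|-\lambda,0\}$. Both objectives are strictly convex, so each has a unique minimizer, characterized by $0$ lying in its subdifferential. Optimality of $\hat w$ gives $z-\hat w\in\partial f(\hat w)$, and the identity $\tilde w=\mathrm{prox}_{\lambda\|\cdot\|_1}(\hat w)$ gives $\hat w-\tilde w\in\lambda\,\partial\|\tilde w\|_1$ (this last inclusion can also be read off componentwise from the definition of soft-thresholding). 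Adding the two, $z-\tilde w=(z-\hat w)+(\hat w-\tilde w)$, so the whole proposition reduces to the single inclusion
\[z-\hat w\in\partial f(\tilde w),\]
since this yields $0\in\tilde w-z+\partial f(\tilde w)+\lambda\,\partial\|\tilde w\|_1$, exactly the optimality condition for $\tilde w$.

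To prove $z-\hat w\in\partial f(\tilde w)$ I would use the description of $\partial f$ via tight level sets. Since $f(w)=\max_{s\in B(F)}s^\top w$ and both sides are invariant under adding multiples of $1_V$ (because $s(V)=F(V)=0$), one may assume $w\geq 0$ and write $f(w)=\int_0^\infty F(\{w\geq\alpha\})\,d\alpha$ while $s^\top w=\int_0^\infty s(\{w\geq\alpha\})\,d\alpha$ for $s\in B(F)$. Comparing the (step-function) integrands and using $s(A)\leq F(A)$ shows that $s\in\partial f(w)$ if and only if $s\in B(F)$ and $s(\{w\geq\alpha\})=F(\{w\geq\alpha\})$ for every $\alpha\in\rb$ (the constraints with content being those at the distinct values of $w$; for $\{w\geq\alpha\}$ equal to $\varnothing$ or $V$ tightness is automatic).

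The key point is then that soft-thresholding maps level sets to level sets: because it is nondecreasing and $1$-Lipschitz in each coordinate, one checks that $\{\tilde w\geq\alpha\}=\{\hat w\geq\alpha+\lambda\}$ for $\alpha>0$ and $\{\tilde w\geq\alpha\}=\{\hat w\geq\alpha-\lambda\}$ for $\alpha\leq 0$, so every level set of $\tilde w$ is a level set of $\hat w$. Hence any $s\in\partial f(\hat w)$ makes all level sets of $\tilde w$ tight as well, i.e.\ $\partial f(\hat w)\subseteq\partial f(\tilde w)$; in particular $z-\hat w\in\partial f(\hat w)\subseteq\partial f(\tilde w)$, which closes the argument and identifies $\tilde w$ with $\mathrm{prox}_{f+\lambda\|\cdot\|_1}(z)$, i.e.\ $\mathrm{prox}_{f+\lambda\|\cdot\|_1}=\mathrm{prox}_{\lambda\|\cdot\|_1}\circ\mathrm{prox}_{f}$.

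The main obstacle is essentially the level-set identity just stated: verifying that soft-thresholding shifts the threshold of each level set by $\pm\lambda$ across $0$ and therefore only ever coarsens the level-set structure (merging some components to $0$, keeping the order of the rest), together with correctly recalling the tight-level-set form of $\partial f$. Everything else is bookkeeping of subgradients. The one place needing a little care is the reduction to $w\geq 0$, the behaviour at $\alpha=0$, and checking that the components sent to $0$ only remove tightness constraints rather than create new ones.
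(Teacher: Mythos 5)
Your proposal is correct and follows essentially the same route as the paper: soft-threshold the minimizer of the $f$-only problem, add the two subgradient inclusions, and observe that the dual vector $s=z-\hat w\in B(F)$ remains a subgradient of $f$ at the thresholded point because soft-thresholding only coarsens the level sets without reordering them. Your version merely makes explicit (via the identity $\{\tilde w\geqslant\alpha\}=\{\hat w\geqslant\alpha\pm\lambda\}$ and the tight-level-set description of $\partial f$) the step the paper dispatches by citation.
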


\begin{figure}

\vspace*{-.45cm}

\begin{center}
\hspace*{1cm}
\includegraphics[scale=.35]{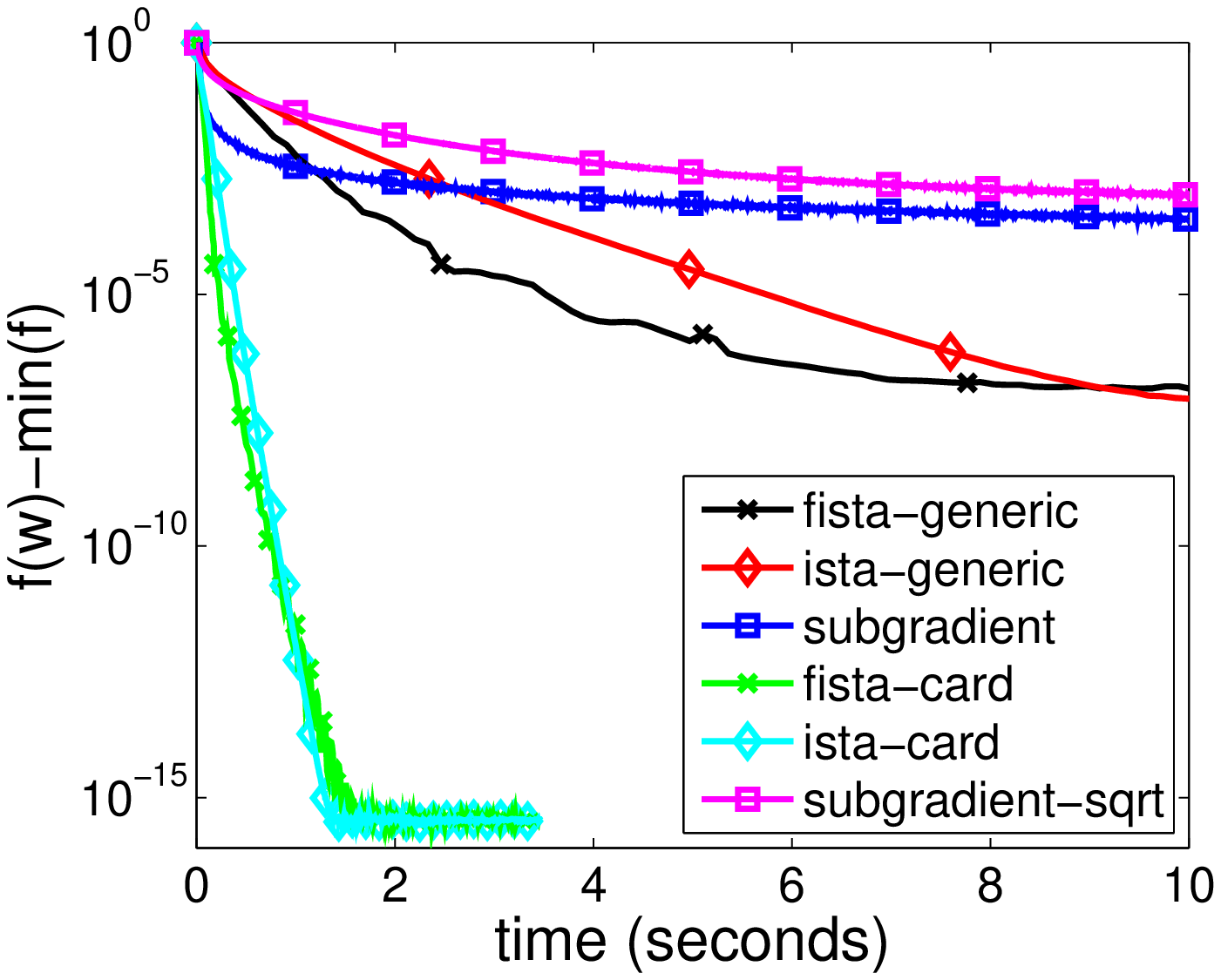} \hspace*{1cm}
\parbox[b]{5cm}{\hspace*{.1cm} \includegraphics[scale=.4]{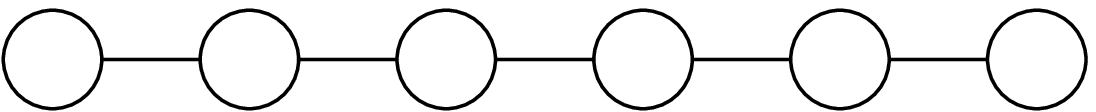}  \\[.1cm]
\includegraphics[scale=.4]{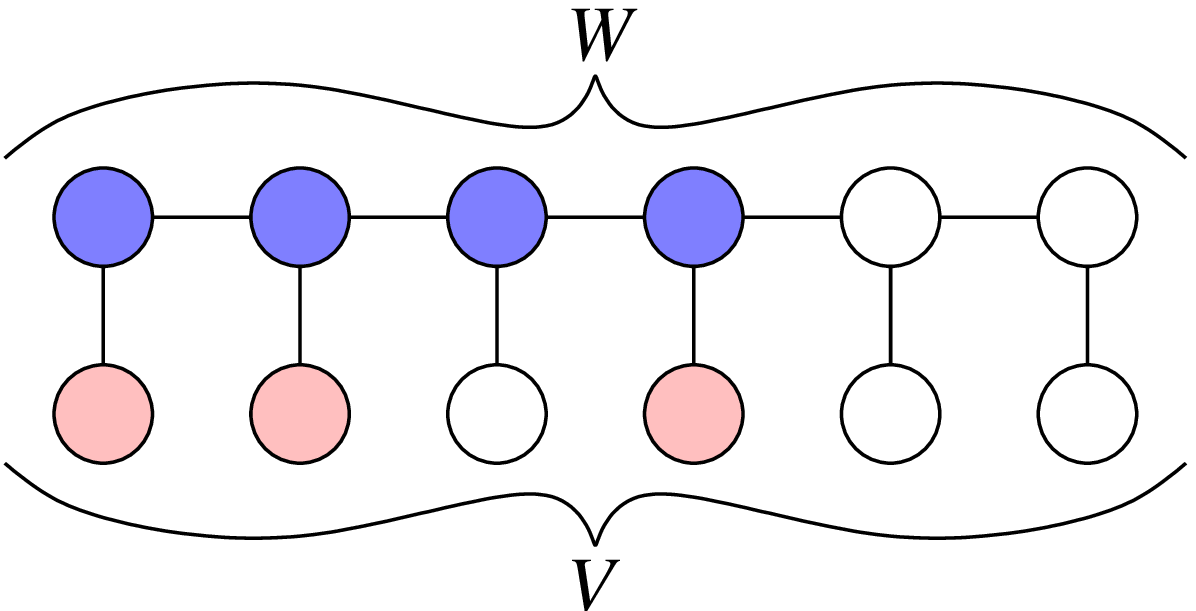} \vspace*{.2cm}}
\end{center}

\vspace*{-.5cm}

\caption{\textbf{Left}: Matlab running times of different optimization methods on 20 replications of a least-squares regression problem with $p=1000$ for a cardinality-based submodular function (best seen in color). Proximal methods with the generic algorithm (using the minimum-norm-point algorithm) are faster than subgradient descent (with two schedules for the learning rate, $1/t$ or $1/\sqrt{t}$). Using the dedicated algorithm (which is not available in all situations) is significantly faster. \textbf{Right}: Examples of graphs (top:   chain graph, bottom: hidden chain graph, with sets $W$ and $V$ and examples of a set $A$ in light red, and $B$ in blue, see text for details).}
\label{fig:speed}
\end{figure}

\section{Sparsity-inducing Properties}
\label{sec:theory}

\vspace*{-.2cm}

Going from the penalization of supports to the penalization of level sets introduces some complexity and for simplicity in this section, we only consider the analysis in the context of orthogonal design matrices, which is often referred to as the denoising problem, and in the context of level set estimation already leads to interesting results. That is, we study the global minimum of the proximal problem in \eq{prox} and make some assumption regarding $z$ (typically $z = w^\ast + \mbox{ noise} $), and provide guarantees related to the recovery of the level sets of $w^\ast$. We first start by characterizing the allowed level sets, showing that the partial constraints defined in \mysec{properties} on faces of $\{f(w) \leqslant 1\}$ do not create by chance further groupings of variables (see proof in supplementary material).
\begin{proposition}[Stable constant sets]
\label{prop:patterns}
Assume  $z \in \rb^p$  has an absolutely continuous density with respect to
the Lebesgue measure. Then, with probability one,  the unique minimizer $\hat{w}$ of \eq{prox} has constant sets that define a partition corresponding to a lattice $\mathcal{D}$ defined in Prop.~\ref{prop:faces}.
  \end{proposition}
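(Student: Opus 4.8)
The plan is to exploit the description of the faces of the ``unit ball'' $\mathcal{U}=\{f\leqslant 1\}$ given in Prop.~\ref{prop:faces}: locate the minimizer $\hat w$ inside the face it belongs to, write it in closed form there, and then show that the only way its constant sets can be strictly coarser than the poset of that face is a Lebesgue-null linear condition on $z$, which is then discarded by a union bound.

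The objective $\Phi(w)=\tfrac12\|w-z\|_2^2+\lambda f(w)$ is strictly convex, so $\hat w$ exists and is unique. If $\hat w$ is proportional to $1_V$, its only constant set is $V$, which is the poset of the lattice $\{\varnothing,V\}$; this lattice satisfies the conditions of Prop.~\ref{prop:faces} (it is trivially modular since $F(\varnothing)=F(V)=0$, and $V$ is inseparable, because a non-trivial split $V=B\cup C$ would force $F(B)=F(C)=0$, contradicting $F(A)>0$ for $\varnothing\neq A\neq V$), so there is nothing to prove. Otherwise, by the remarks following Prop.~\ref{prop:faces}, $\hat w$ belongs to exactly one of the sets $\mathcal{U}^\circ_\mathcal{D}$, with $\mathcal{D}$ a lattice satisfying (i)--(iii) and poset $\Pi(\mathcal{D})=\{A_1,\dots,A_m\}$ a partition of $V$; thus $\hat w=\sum_{i=1}^m v_i 1_{A_i}$ with $v_i>v_j$ whenever $A_i\succcurlyeq A_j$. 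The constant sets of $\hat w$ are unions of blocks $A_i$ sharing a common value, and they coincide with $\Pi(\mathcal{D})$ — hence correspond to the lattice $\mathcal{D}$ of Prop.~\ref{prop:faces} — unless two \emph{incomparable} blocks happen to carry the same value. So it suffices to show that this coincidence occurs only on a Lebesgue-null set of $z$.

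Since $\mathcal{U}$ and $B(F)$ are polar, the face of $\mathcal{U}$ containing $\hat w$ in its relative interior is exposed by some $s_\mathcal{D}\in B(F)$, and on the (scale-invariant) set $\mathcal{U}^\circ_\mathcal{D}$ the extension $f$ agrees with the linear map $w\mapsto s_\mathcal{D}^\top w=\sum_i v_i\,s_\mathcal{D}(A_i)$. Hence, on the subset $\mathcal{U}^\circ_\mathcal{D}$, which is relatively open in $V_\mathcal{D}:=\mathrm{span}(1_{A_1},\dots,1_{A_m})$ and contains $\hat w$, the function $\Phi$ coincides with the strictly convex quadratic $w\mapsto\tfrac12\|w-z\|_2^2+\lambda\,s_\mathcal{D}^\top w$; as $\hat w$ minimizes $\Phi$ over all of $V_\mathcal{D}$, it minimizes this quadratic over $V_\mathcal{D}$, and because the indicators $1_{A_i}$ have pairwise disjoint supports the minimization decouples to give
\[
v_i\;=\;\frac{z(A_i)-\lambda\,s_\mathcal{D}(A_i)}{|A_i|},\qquad i=1,\dots,m .
\]
Therefore, on the event ``$\hat w\in\mathcal{U}^\circ_\mathcal{D}$ and the incomparable blocks $A_i,A_j$ carry the same value'', $z$ must satisfy $|A_j|\,z(A_i)-|A_i|\,z(A_j)=\lambda\bigl(|A_j|\,s_\mathcal{D}(A_i)-|A_i|\,s_\mathcal{D}(A_j)\bigr)$; since $A_i$ and $A_j$ are disjoint and non-empty, the left-hand side is a non-zero linear functional of $z$, so this is a proper affine hyperplane of $\rb^p$ and has Lebesgue measure zero.

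Finally, there are only finitely many lattices $\mathcal{D}$ as in Prop.~\ref{prop:faces}, each with finitely many pairs of incomparable blocks, so the union of all the hyperplanes above is still Lebesgue-null; since $z$ has a density with respect to Lebesgue measure, this union has probability zero. Off it, $\hat w$ is either proportional to $1_V$ or lies in some $\mathcal{U}^\circ_\mathcal{D}$ with all block values distinct, and in either case its constant sets are exactly the partition $\Pi(\mathcal{D})$ of a lattice $\mathcal{D}$ of Prop.~\ref{prop:faces}. The main obstacle — and the reason a naive ``preimage of a measure-zero set'' argument fails — is that the proximal map can collapse a positive-measure set of $z$'s onto a lower-dimensional set (as soft-thresholding does); what rescues the argument is the closed-form expression for $\hat w$ on each face, which relies on $f$ being \emph{linear} there (polarity of $\mathcal{U}$ and $B(F)$) and on the disjointness of the blocks, and which is precisely what turns the forbidden coincidence of values into an affine condition on $z$.
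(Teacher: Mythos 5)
Your argument is correct and follows essentially the same route as the paper: the paper's proof invokes its Lemma~\ref{lemma:opt}, which contains exactly the closed form $v=(M^\top M)^{-1}(M^\top z-\lambda t)$, i.e.\ $v_i=(z(A_i)-\lambda t_i)/|A_i|$ with $t_i=s_{\mathcal D}(A_i)$, that you rederive from the linearity of $f$ on the face containing $\hat w$, and then observes that this $v$ has almost surely distinct components. Your explicit hyperplane-plus-union-bound formulation of that last step is the same measure-zero argument, just spelled out in more detail.
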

We now show that under certain conditions the recovered constant sets are the correct ones:\\[-.5cm]
\begin{theorem}[Level set recovery]
\label{theo:support}
Assume that $z = w^\ast + \sigma \varepsilon$, where $\varepsilon \in \rb^p $ is a standard Gaussian random vector, and $z^\ast$ is consistent with the lattice $\mathcal{D}$ and its associated poset $\Pi(\mathcal{D}) = (A_1,\dots,A_m)$, with values $v^\ast_j$ on $A_j$, for $j \in \{1,\dots,m\}$. Denote $B_j = A_1 \cup \cdots \cup A_j$ for $j \in \{1,\dots,m\}$.
Assume that there exists some constants $\eta_j>0$ and $\nu>0$  such that:
\BEA
\label{eq:eta}
&\textstyle
\hspace*{-.45cm}
 \forall C_j \!\subset\! A_j, 
F(B_{j-1} \!\cup\! C_j) \!-\!  F(B_{j-1} ) \!-\! \frac{ |C_j|}{|A_j|}  [ F(B_{j-1}\! \cup \! A_j) \!-\! F(B_{j-1}) ]
\geqslant \eta_j \min \! \big\{
\frac{ |C_j|}{|A_j|}  , 1 \!-\!\frac{ |C_j|}{|A_j|}  
\big\}, \ \
\\[-.1cm]
&\label{eq:rho}
\forall i,j \in \{1,\dots,m\} , \  A_i \succcurlyeq A_j  \Rightarrow  v_i^\ast - v_j^\ast \geqslant \nu,
\\[-.1cm]
&
\label{eq:lambda}
\textstyle
\forall j \in \{1,\dots,m\} , \   \lambda  \big| \frac{F(B_{j} ) - F(B_{j-1})}{|A_j|} \big|
\leqslant \nu / 4.\\[-.4cm]
\nonumber
\EEA
Then 
the unique minimizer $\hat{w}$ of \eq{prox}  is associated to the same lattice  $\mathcal{D}$ than $w^\ast$,
with probability greater than
$
\textstyle 1 - \sum_{j=1}^m \exp\big( -\frac{ \nu^2 |A_j|}{32 \sigma^2} \big) - 2 \sum_{j=1}^m |A_j| \exp\big( - \frac{ \lambda^2 \eta_j^2}{ 2\sigma^2 |A_j|^2 } \big)
$.
\end{theorem}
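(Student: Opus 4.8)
The plan is to exhibit the minimizer explicitly on the subspace of vectors that are constant on each block $A_j$, and then certify that this candidate is in fact the global minimizer by checking the subgradient optimality condition for $f$.

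First I would restrict \eq{prox} to the subspace $\mathcal{W}_\mathcal{D}$ of vectors that are constant on each $A_j$. On the cone spanned by $\mathcal{U}_\mathcal{D}$ the \lova extension is linear: writing $w=\sum_j v_j 1_{A_j}$, the greedy algorithm of \mysec{submodular} together with the modularity of $F$ on $\mathcal{D}$ (condition (i) of Prop.~\ref{prop:faces}) gives $f(w)=\sum_{j=1}^m t_j v_j$ with $t_j = F(B_j)-F(B_{j-1})$ (and $\sum_j t_j = F(V)=0$). The restricted problem is thus a separable quadratic whose minimizer is $\hat v_j = \bar z_{A_j} - \lambda t_j/|A_j|$, where $\bar z_{A_j}$ is the mean of $z$ over $A_j$; set $\hat w = \sum_j \hat v_j 1_{A_j}$. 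Since $z = w^\ast + \sigma\varepsilon$ and $w^\ast$ is constant on $A_j$, $\bar z_{A_j} = v^\ast_j + \sigma\bar\varepsilon_{A_j}$ with $\bar\varepsilon_{A_j}\sim\mathcal N(0,1/|A_j|)$. Combining $v_i^\ast - v_j^\ast \geq \nu$ from \eq{rho} with $\lambda|t_k|/|A_k|\leq \nu/4$ from \eq{lambda}, the strict ordering $A_i\succcurlyeq A_j \Rightarrow \hat v_i > \hat v_j$ holds whenever $\sigma|\bar\varepsilon_{A_j}| < \nu/4$ for every block; a union bound over the $m$ blocks with Gaussian tail bounds yields the first term $\sum_j\exp(-\nu^2|A_j|/(32\sigma^2))$ of the stated failure probability. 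On this event $\hat w$ lies (up to the positive scalar $f(\hat w)$) in $\mathcal{U}^\circ_\mathcal{D}$, so by Prop.~\ref{prop:faces} it is associated to $\mathcal{D}$ and $\partial f(\hat w) = \{ s\in B(F): s(A)=F(A)\ \forall A\in\mathcal{D}\}$.

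Second, and this is the crux, I would show that $\hat w$ is the global minimizer of \eq{prox}, not merely of its restriction to $\mathcal{W}_\mathcal{D}$. By convexity this is equivalent to $s := (z-\hat w)/\lambda \in \partial f(\hat w)$, i.e.\ $s\in B(F)$ and $s^\top\hat w = f(\hat w)$. A direct computation gives $s(A_j) = t_j$ for all $j$; since $F$ is modular on the ideal lattice $\mathcal{D}$ (hence additive over the poset elements it contains), this forces $s(A) = F(A)$ for all $A\in\mathcal{D}$, which together with $\hat w\in\mathcal{U}^\circ_\mathcal{D}$ already delivers $s^\top\hat w = f(\hat w)$ and $s(V)=F(V)$. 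The remaining delicate point is $s(A)\leq F(A)$ for \emph{all} $2^p$ subsets $A$. For this I would write $C_j = A\cap A_j$, apply the submodular diminishing-returns inequality $F(A) \geq \sum_j [F(B_{j-1}\cup C_j) - F(B_{j-1})]$, and then the margin hypothesis \eq{eta} (noting $F(B_{j-1}\cup A_j)-F(B_{j-1}) = t_j$) to get
\[
F(A) - s(A) \;\geq\; \textstyle\sum_{j=1}^m \Big[\, \eta_j \min\big\{ \tfrac{|C_j|}{|A_j|},\, 1-\tfrac{|C_j|}{|A_j|}\big\} \;-\; \tfrac{\sigma}{\lambda}\sum_{k\in C_j}(\varepsilon_k - \bar\varepsilon_{A_j}) \,\Big].
\]
Because the within-block centered variables sum to zero, $\big|\sum_{k\in C_j}(\varepsilon_k-\bar\varepsilon_{A_j})\big| \leq \min\{|C_j|,|A_j|-|C_j|\}\cdot\max_{k\in A_j}|\varepsilon_k-\bar\varepsilon_{A_j}|$, so every bracket is nonnegative as soon as $\max_{k\in A_j}|\varepsilon_k-\bar\varepsilon_{A_j}| \leq \lambda\eta_j/(\sigma|A_j|)$. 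A union bound over the $\sum_j|A_j|$ coordinates, using $\varepsilon_k-\bar\varepsilon_{A_j}\sim\mathcal N(0,1-1/|A_j|)$, produces exactly the second term $2\sum_j|A_j|\exp(-\lambda^2\eta_j^2/(2\sigma^2|A_j|^2))$. On the intersection of the two good events we get $s\in B(F)\cap\partial f(\hat w)$, hence $0\in\hat w - z + \lambda\partial f(\hat w)$; strict convexity makes $\hat w$ the unique minimizer, and Step one showed it is associated to $\mathcal{D}$ (Prop.~\ref{prop:patterns} guarantees an associated lattice is well defined almost surely anyway).

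The main obstacle is precisely this last step: certifying $F(A)-s(A)\geq 0$ over all subsets while spending only polynomially many concentration events. What makes it tractable is that \eq{eta} supplies a margin of the specific shape $\eta_j\min\{\text{ratio},1-\text{ratio}\}$, so that after the diminishing-returns decomposition the inequality decouples across blocks and, within each block, reduces to an $\ell_\infty$ control of the centered noise — $|A_j|$ events instead of $2^{|A_j|}$. The only other points requiring care are the identification of $\partial f(\hat w)$ with the tight-set face via Prop.~\ref{prop:faces}, and checking that $f$ is genuinely linear (equal to $\sum_j t_j v_j$) on the cone over $\mathcal{U}_\mathcal{D}$, both of which follow from the modularity of $F$ on $\mathcal{D}$ and the greedy-algorithm formula for the base polyhedron.
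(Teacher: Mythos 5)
Your proposal is correct and follows essentially the same route as the paper: the same explicit block-mean candidate $\hat v_j = \bar z_{A_j} - \lambda t_j/|A_j|$, the same two-part optimality certificate (ordering of the $v_j$'s plus the dual vector $(z-\hat w)/\lambda \in B(F)$), and the same reduction of the $2^p$ constraints to per-block ones via the diminishing-returns decomposition and \eq{eta}. The only cosmetic difference is the final concentration step, where the paper invokes a lemma on order statistics of the centered noise while you use the sum-to-zero identity plus an $\ell_\infty$ bound on individual centered coordinates; both yield the identical $2\sum_j |A_j|\exp(-\lambda^2\eta_j^2/(2\sigma^2|A_j|^2))$ term.
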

We now discuss  the three main assumptions of Theorem~\ref{theo:support} as well as the probability estimate:\\[-.6cm]
\BIT
\item[--] \eq{eta} is the equivalent of the support recovery of the Lasso~\cite{Zhaoyu} or its extensions~\cite{bach2010structured}. The main difference is that for support recovery, this assumption is always met for orthogonal designs, while here it is not always met. Interestingly, the validity of level set recovery  implies the agglomerativity of proximal paths (\eq{agglo} in Prop.~\ref{prop:agglo}).

Note that if  \eq{eta} is satisfied only with $\eta_j \geqslant 0$ (it is then exactly \eq{agglo} in Prop.~\ref{prop:agglo}), then, even with infinitesimal noise, one can show that in some cases, the wrong level sets may be obtained with non vanishing probability, while if $\eta_j$ is strictly negative, one can show that in some cases, we \emph{never} get the correct level sets. \eq{eta} is thus essentially sufficient and necessary.

\item[--] \eq{rho} corresponds to having  distinct values of $w^\ast$  far enough from each other.

\item[--]  \eq{lambda} is a constraint on $\lambda$ which controls the bias  of the estimator: if it is too large, then there may be a merging of two clusters.

\item[--] In the probability estimate, the second term is small if all $\sigma^2 |A_j|^{-1}$ are small enough (i.e., given the noise, there is enough data to correctly estimate the values of the constant sets) and the third term is small if  $\lambda$ is large enough, to avoid that clusters split.
\EIT

\textbf{One-dimensional total variation.} \hspace*{.05cm}
In this situation, we always get $\eta_j=0$, but in some cases, it cannot be improved (i.e., the best possible $\eta_j$ is equal to zero), and as shown in the supplementary material, this occurs as soon as there is a ``staircase'', i.e., a piecewise constant vector, with a sequence of at least two consecutive increases, or two consecutive decreases, showing that in the presence of such staircases, one cannot have consistent support recovery, which is a well-known issue in signal processing (typically, more steps are created).
If there is no staircase effect, we have $\eta_j = 1$ and \eq{lambda} becomes
$\lambda \leqslant \frac{\nu}{8}\min_{j} |A_j| $. If we take $\lambda$ equal to the limiting value in \eq{lambda}, then we obtain a probability less than
$ 1 - 4 p \exp ( - \frac{ \nu^2 \min_{j} |A_j|^2}{ 128 \sigma^2 \max_{j} |A_j|^2 }  ) $.
 Note that we could also derive general results when an additional $\ell_1$-penalty is used, thus extending results from~\cite{rinaldo2009properties}. 
 
 \textbf{Two-dimensional total variation.} \hspace*{.05cm}
 In this situation, even with only two different values for $z^\ast$, then we may have $\eta_j < 0$, leading to additional problems, which has already been noticed in continuous settings (see, e.g.,~\cite{duval:154} and the supplementary material).

 \textbf{Clustering with $F(A) = |A| \cdot | V \backslash A |$.} \hspace*{.05cm}  In this case, we have
 $\eta_j = |A_j|/2$, and \eq{lambda} becomes $\lambda \leqslant \frac{\nu}{4p}$, leading to the probability of correct support estimation greater than 
$ 1 - 4 p \exp\big( - \frac{  \nu^2}{ 128 p \sigma^2   } \big)$. This indicates that the noise variance $\sigma^2$ should be small compared to $1/p$, which is not satisfactory and would be corrected with the weighting schemes proposed in~\cite{toby}.
 
\section{Conclusion}

\vspace*{-.1cm}

We have presented a family of sparsity-inducing norms dedicated to incorporating prior knowledge or structural constraints on the level sets of linear predictors. We have provided a set of common algorithms and theoretical results, as well as simulations on synthetic examples illustrating the behavior of these norms. Several avenues are worth investigating: first, we could follow current practice in sparse methods, e.g., by considering related adapted concave penalties to enhance sparsity-inducing capabilities, or by extending some of the concepts for norms of matrices, with potential applications in matrix factorization~\cite{Srebro2005Maximum} or multi-task learning~\cite{argyriou2008convex}.

\subsection*{Acknowledgements}
 This paper was partially supported by   the Agence Nationale de la Recherche (MGA Project), the European Research Council (SIERRA Project) and Digiteo (BIOVIZ project).

 \newpage

\appendix

\section{Proof of Proposition~\ref{prop:envelope}}

\begin{proof}
For any $w \in \rb^p$, level sets of $w$ are characterized by an ordered partition $(A_1,\dots,A_m)$ so that $w$ is constant on each $A_j$, with value $t_j$, $j=1,\dots,m$, and so that $(t_j)$ is a strictly decreasing sequence. We can now decompose minimization with respect to $w$ using these ordered partitions and $(t_j)$.

In order to compute the convex envelope, we simply need to compute twice the Fenchel conjugate of the function we want to find the envelope of~(see, e.g., \cite{boyd,borwein2006caa} for definitions and properties of Fenchel conjugates).

Let $s \in \rb^p$; we consider the function $g:w \mapsto \max_{ \alpha \in \rb}  F( \{ w \geqslant \alpha \} )$, and we compute its Fenchel conjugate:
\BEAS
& & g^\ast(s)  \\
& \eqdef & \max_{ w \in [0,1]^p + \rb 1_V  } w^\top s - g(w), \\
& = & \max_{(A_1,\dots,A_m)  \  { \rm    partition} }  \  \bigg\{  \ \max_{ t_1 > \cdots > t_m, \ t_1 - t_m \leqslant 1 }
\sum_{j=1}^{m}t_j s(A_j)   - \max_{j \in \{1,\dots,m\} } F(A_1 \cup \cdots \cup A_j)  \bigg\},\\
& = & \max_{(A_1,\dots,A_m) \  { \rm    partition} }  \  \bigg\{  \ \max_{ t_1 > \cdots > t_m, \ t_1 - t_m \leqslant 1 }
\sum_{j=1}^{m-1} (t_j - t_{j+1} ) s(A_1 \cup \cdots \cup A_j) + t_m s(V) \\
& & \hspace*{5cm} - \max_{j \in \{1,\dots,m\} } F(A_1 \cup \cdots \cup A_j)  \bigg\}  
\mbox{ by integration by parts,}\\
& = & \iota_{ s(V) =0 } (s) +  \max_{(A_1,\dots,A_m) \  { \rm     partition} }
\bigg\{
\max_{j \in \{1,\dots,m-1\} } s(A_1 \cup \cdots \cup A_j) - \max_{j \in \{1,\dots,m\} } F(A_1 \cup \cdots \cup A_j)
\bigg\},
\\& = & \iota_{ s(V) =0 } (s) +  \max_{(A_1,\dots,A_m) \  { \rm     partition} }
\bigg\{
\max_{j \in \{1,\dots,m-1\} } s(A_1 \cup \cdots \cup A_j) - \max_{j \in \{1,\dots,m-1\} } F(A_1 \cup \cdots \cup A_j)
\bigg\},
\EEAS
where $\iota_{ s(V) =0 } $ is the indicator function of the set $\{ s(V) = 0\}$ (with values $0$ or $+\infty$). Note that $\max_{j \in \{1,\dots,m\} } F(A_1 \cup \cdots \cup A_j)
  = \max_{j \in \{1,\dots,m-1\} } F(A_1 \cup \cdots \cup A_j) $ because $F(V) = 0$.

Let $h(s) =\iota_{ s(V) =0 }(s)  +  \max_{A \subset V}  \{ s(A) - F(A) \} $. We clearly have $g^\ast(s) \geqslant h(s)$, because we take a maximum over a larger set (consider $m=2$). Moreover, for all partitions $(A_1,\dots,A_m)$, if $s(V)=0$,
$\max_{j \in \{1,\dots,m-1\} } s(A_1 \cup \cdots \cup A_j)  \leqslant 
\max_{j \in \{1,\dots,m-1\} } ( h(s) + F(A_1 \cup \cdots \cup A_j) )
= h(s) + \max_{j \in \{1,\dots,m-1\} } F(A_1 \cup \cdots \cup A_j)$, which implies that $g^\ast(s) \leqslant h(s)$. Thus $g^\ast(s) = h(s)$.

Moreover, we have, since $f$ is invariant by adding constants and $f$ is submodular,
\BEAS
 \max_{ w \in [0,1]^p + \rb 1_V  } w^\top s - f(w) & = & 
 \iota_{ s(V) =0 } (s) +  \max_{ w \in [0,1]^p   }  \{w^\top s - f(w) \} \\ 
 & = & 
 \iota_{ s(V) =0 } (s) +  \max_{ A \subset V  } \{s(A) - F(A) \}  = h(s),
\EEAS
where we have used the fact that minimizing a submodular function is equivalent to minimizing its \lova extension on the unit hypercube.
Thus $f$ and $g$ have the same Fenchel conjugates. The result follows from the convexity of $f$, using the fact the convex envelope is the Fenchel bi-conjugate~\cite{boyd,borwein2006caa}.
\end{proof}

\section{Proof of Proposition~\ref{prop:extreme}}

\begin{proof} Extreme points of $\mathcal{U}$ correspond to full-dimensional faces of $B(F)$. From Corollary 3.4.4 in \cite{fujishige2005submodular}, these facets are exactly the ones that correspond to sets $A$ with the given conditions. These facets are defined as the intersection of $\{s(A)=F(A)\}$ and $\{ s(V) = F(V) \}$,  which leads to the desired result. Note that this is also a consequence of Prop.~\ref{prop:faces}. Note that when $F$ is symmetric, the second condition is equivalent to $V \backslash A$ being inseparable for $F$.
\end{proof}

\section{Proof of Proposition~\ref{prop:faces}}
 \begin{proof}
Given that the polyhedra $\mathcal{U}$ and $B(F)$ are polar to each other~\cite{rockafellar97}, the proposition follows from Theorem 3.43 in \cite{fujishige2005submodular}, where each of our three assumptions are equivalent to a corresponding one in Theorem 3.43 from \cite{fujishige2005submodular}.
\end{proof}

\section{Proof of Proposition~\ref{prop:agglo}}
\label{app:agglo}

We first start by a lemma, which follows common practice in sparse recovery (assume a certain sparsity pattern and check when it is actually optimal):
\begin{lemma}[Optimality of lattice for proximal problem]
\label{lemma:opt}

The solution of the proximal problem in \eq{prox}
corresponds to a lattice $\mathcal{D}$ if and only if
$v = (M^\top M)^{-1} ( M^\top z - \lambda t)$ satisfies the order relationships imposed by $\mathcal{D}$ and
$$\frac{1}{\lambda}( \idm - M (M^\top M)^{-1} M^\top ) z + M (M^\top M)^{-1} t \in B(F),$$
where $M\in \rb^{p \times m}$ is the indicator matrix of the partition $\Pi(\mathcal{D})$, and $t_i = F( A_1 \cup \cdots \cup A_i) -  F( A_1 \cup \cdots \cup A_{i-1})$, $i=1,\dots,m$.
\end{lemma}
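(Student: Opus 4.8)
The plan is to characterize when the unique minimizer $\hat w$ of the strongly convex problem \eq{prox} has a prescribed constant-set structure, by writing $\hat w$ in the parametrized form $\hat w = Mv$ for the partition matrix $M$ of $\Pi(\mathcal{D})$, and then invoking the first-order optimality conditions together with the dual (base polyhedron) description of $f$.

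First I would recall that $f(w) = \max_{s \in B(F)} s^\top w$, so \eq{prox} can be written as a saddle-point problem $\min_{w}\max_{s \in B(F)} \tfrac12\|w-z\|_2^2 + \lambda s^\top w$, whose optimality conditions are $w = z - \lambda s$ for some $s \in B(F)$ with $s$ maximizing $s^\top w$ over $B(F)$, i.e. $f(w) = s^\top w$. Now suppose the solution is piecewise constant on $\Pi(\mathcal{D}) = (A_1,\dots,A_m)$, with value $v_i$ on $A_i$ and $v$ satisfying the \emph{strict} order relations of $\mathcal{D}$. For such a $w = Mv$, the greedy algorithm (\mysec{submodular}) shows exactly which $s \in B(F)$ maximize $s^\top w$: since the level sets of $w$ are precisely the sets $B_i = A_1 \cup \cdots \cup A_i$ (with $B_0 = \varnothing$, $B_m = V$), the maximizers are those $s \in B(F)$ with $s(B_i) = F(B_i)$ for all $i$, equivalently $s(A_i) = t_i := F(B_i) - F(B_{i-1})$ for all $i$, i.e. $M^\top s = t$. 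So the optimality conditions become: there exists $s \in B(F)$ with $M^\top s = t$ and $Mv = z - \lambda s$.

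The next step is to solve these equations for $v$ and re-express the constraint purely in terms of the data. From $Mv = z - \lambda s$, multiply on the left by $M^\top$: $M^\top M v = M^\top z - \lambda M^\top s = M^\top z - \lambda t$, hence $v = (M^\top M)^{-1}(M^\top z - \lambda t)$ (note $M^\top M$ is the diagonal matrix $\Diag(|A_1|,\dots,|A_m|)$, hence invertible). Then $s = \tfrac1\lambda(z - Mv) = \tfrac1\lambda\big(z - M(M^\top M)^{-1}(M^\top z - \lambda t)\big) = \tfrac1\lambda(I - M(M^\top M)^{-1}M^\top)z + M(M^\top M)^{-1}t$, which is exactly the vector in the statement. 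So the candidate solution exists and is optimal if and only if (a) this $v$ satisfies the order relations of $\mathcal{D}$ (so that $Mv$ genuinely has the claimed level-set structure and the greedy characterization of maximizers applies), and (b) this $s$ lies in $B(F)$ (so that it is a feasible dual point). Conversely, if both hold, then $(Mv, s)$ satisfies the saddle-point optimality conditions, and by strong convexity $\hat w = Mv$ is \emph{the} minimizer, so its lattice is $\mathcal{D}$; this gives the "if" direction. For "only if", if the minimizer corresponds to $\mathcal{D}$, run the argument above: its unique representation forces $v$ and $s$ to take these values, and dual feasibility forces $s \in B(F)$.

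The main subtlety — and the step I would be most careful about — is the equivalence between "$w = Mv$ with $v$ respecting the strict order of $\mathcal{D}$" and "$\{s \in B(F): s^\top w = f(w)\} = \{s: M^\top s = t\}$". One direction (strict order $\Rightarrow$ the $s$'s with $M^\top s = t$ are maximizers) is the greedy algorithm applied to any ordering consistent with the chains of $\mathcal{D}$; the slightly delicate point is that when $v$ only respects the \emph{partial} order, different linear extensions give different level sets $B_i$, but the tight-set lattice (\mysec{submodular}) is closed under union and intersection, so the common tight sets across all consistent orderings are exactly the elements of $\mathcal{D}$, and the condition "$s(A) = F(A)$ for all $A \in \mathcal{D}$" is equivalent to "$M^\top s = t$" by telescoping along the chains. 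I would also remark that condition (i) of Prop.~\ref{prop:faces} (modularity of $F$ on $\mathcal{D}$) is what guarantees $t$ is well-defined independent of the chosen chain, which is implicitly needed here. With these observations the lemma follows; the remaining steps are routine linear algebra.
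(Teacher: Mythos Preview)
Your argument is correct and follows essentially the same route as the paper: derive the candidate $v$ and $s$ from the first-order optimality condition $w - z + \lambda s = 0$ with $s \in B(F)$ and $f(w) = s^\top w$, then observe that the tightness constraints $s(B_i)=F(B_i)$ (equivalently $M^\top s = t$) are automatically satisfied by the resulting $s$, so that only $s\in B(F)$ and the order constraints on $v$ remain to be checked. The paper verifies $M^\top s = t$ by a direct computation with indicator vectors rather than by your algebraic back-substitution, and it does not spell out the partial-order/modularity subtlety you raise (it simply invokes linearity of $f$ on the face $\mathcal{U}_\mathcal{D}$, which implicitly uses condition~(i) of Prop.~\ref{prop:faces}); your added remark on that point is correct and a welcome clarification.
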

\begin{proof}
Any $w \in \rb^p$ belongs to a single face relative interior from Prop.~\ref{prop:faces}, defined by a lattice $\mathcal{D}$, i.e., $w$ is  constant on $A_i$ with value $v_i$ (which implies that $w = Mv$) and such that $v_i > v_{j}$ as soon as $A_i \succcurlyeq  A_j$. We assume a topological ordering of the sets $A_i$, i.e, $A_i \succcurlyeq A_j \Rightarrow i \geqslant j$. Since the \lova extension is linear for $w$ in $\mathcal{U}_\mathcal{D}$ (and equal to $t^\top v$ for $w = Mv$), the optimum over $w$ can be found by minimizing  with respect to $v$
$$
\frac{1}{2} \| z - M v\|_2^2 + \lambda t^\top v.
$$
 We thus get, by setting the gradient to zero:
$$v = (M^\top M)^{-1} ( M^\top z - \lambda t).$$ 

Optimality conditions for $w$ for \eq{prox} are that $w - z + \lambda s = 0$, for $s \in B(F)$ and $f(w) = w^\top s$
(these are obtained from general optimality conditions for functions defined as pointwise maxima~\cite{borwein2006caa}). 
Thus our candidate $w = Mv$ is optimal if and only if $Mv - z + \lambda s = w - z + \lambda s = 0$ for (a)  $s \in B(F)$ and (b) $f(w) = w^\top s$. From Prop.~10 in~\cite{submodular_tutorial}, for (b)  to be valid, $s \in B(F)$ simply has to satisfy $s(A_1 \cup \cdots \cup A_i) = F(A_1 \cup \cdots \cup A_i)$ for all $i$.

Note that
$$
z - M v = ( \idm - M (M^\top M)^{-1} M^\top ) z + \lambda M (M^\top M)^{-1} t,
$$
and that for all $i \in \{1,\dots,m\}$,
$$1_{A_i}^\top  ( \idm - M  (M^\top M)^{-1}  M^\top ) z =
\delta_i^\top M^\top  ( \idm - M  (M^\top M)^{-1}  M^\top ) z =
0, $$ 
where $\delta_i$ is indicator vector of the singleton $\{i\}$. Moreover, we have 
$$1_{A_i}^\top M (M^\top M)^{-1} t = t_i = F( A_1 \cup \cdots \cup A_i) -  F( A_1 \cup \cdots \cup A_{i-1}),$$ so that, if $B_i =A_1 \cup \cdots \cup A_i$, 
$ [ ( \idm - M  (M^\top M)^{-1}  M^\top ) z ] (B_i) = 0$, $[ M (M^\top M)^{-1} t] (B_i) = F(B_i)$, for all $i \in \{1,\dots,m\}$. This implies that $ \big[\frac{1}{\lambda}(z - Mv) \big](A_i) = t_i$, and thus 
$ \big[\frac{1}{\lambda}(z - Mv) \big](B_i) = F(B_i)$.

 Thus, if (a) is satisfied, then (b) is always satisfied. Thus to check if a certain lattice leads to the optimal solution, we simply have to check that 
$\frac{1}{\lambda}( \idm - M (M^\top M)^{-1} M^\top ) z + M (M^\top M)^{-1} t \in B(F)$.
\end{proof}

 We now turn to the proof of Proposition~\ref{prop:agglo}.
 
\begin{proof}
We show that when $\lambda$ increases, we move to a lattice which has to be merging some constant sets. Let us assume that a lattice $\mathcal{D}$ is optimal for a certain $\mu$. Then,
from Lemma~\ref{lemma:opt}, we have
$$\frac{1}{\mu}( \idm - M (M^\top M)^{-1} M^\top ) z + M (M^\top M)^{-1} t \in B(F).$$

Moreover,  since from Prop.~\ref{prop:faces},  $A_i$ is separable for $C_i \mapsto F(B_{i-1}\cup C_i) - F(B_{i-1})$, from the assumption of the proposition, we obtain:
$$
\forall C_i \subset A_i, \
[ M (M^\top M)^{-1} t] (C_i) 
= \frac{|C_i|}{|A_i| }( F(B_{i-1}\cup A_i) - F(B_{i-1}) )\leqslant F(B_{i-1}\cup C_i) - F(B_{i-1}),
$$
which implies, for all $C \subset V$:
\BEAS
[ M (M^\top M)^{-1} t] (C) & = & \sum_{j=1}^m [ M (M^\top M)^{-1} t] (C \cap A_i) \mbox{ by modularity,}
\\ & \leqslant &  \sum_{i=1}^m \bigg\{ F(B_{i-1}\cup ( C \cap A_i) ) - F(B_{i-1}) \bigg\} \mbox{ from above,} 
\\ & \leqslant &  \sum_{i=1}^m \bigg\{ F((B_{i-1} \cap C)\cup ( C \cap A_i) ) - F(B_{i-1} \cap C) \bigg\} \mbox{ by submodularity,}
\\ & = &  \sum_{i=1}^m \bigg\{  F(B_i \cap C  ) - F(B_{i-1} \cap C) \bigg\}= F(C).
\EEAS

Thus, for any set $C$, we have for $\lambda \geqslant \mu$ (which implies $\frac{\mu}{\lambda} \in [0,1]$),
\BEAS
& & \big[ \frac{1}{\lambda}( \idm - M (M^\top M)^{-1} M^\top ) z + M (M^\top M)^{-1} t \big](C) \\
& = &  \frac{\mu}{\lambda} 
\big[ \frac{1}{\mu}( \idm - M (M^\top M)^{-1} M^\top ) z + M (M^\top M)^{-1} t \big](C) 
+ ( 1- \frac{\mu}{\lambda} ) \big[M (M^\top M)^{-1} t \big](C)  \\
& \leqslant & \frac{\mu}{\lambda}  F(C) +  ( 1- \frac{\mu}{\lambda} ) F(C) = F(C).
\EEAS
Thus the second condition in Lemma~\ref{lemma:opt} is satisfied, thus it has to be the first one which is violated, leading to merging two constant sets.
\end{proof}

We now show that for special cases, the condition in \eq{agglo} is satisfied, and we   also show when the condition in \eq{eta} of Theorem~\ref{theo:support} is satisfied or not:
\BIT
\item \textbf{Cardinality-based functions}: the condition in \eq{agglo} is equivalent to
$$
\frac{ h(|B|+|A|) - h(|B|)}{|A|}  
\leqslant   \frac{ h(|B|+|C|) - h(|B|)}{|C|} ,
$$
which is a consequence of the concavity of $h$. Moreover the condition in \eq{eta} is equivalent to
$$ h(|B|+|C|) - h(|B|)  - \frac{ |C|}{|A|}  [ h(|B|+|A|) - h(|B|)   ]
\geqslant \eta \min \Big\{
\frac{ |C|}{|A|}  , 1 -\frac{ |C|}{|A|}  \Big\}.$$
For $h(t) = t(p-t)$, this is equivalent to
$$
|A| ( |C|-|A|) \geqslant \eta \min \Big\{
\frac{ |C|}{|A|}  , 1 -\frac{ |C|}{|A|}  \Big\},
$$
which is true as soon as $\eta \leqslant |A|/2$.

\item \textbf{One-dimensional total variation}: we assume that we have a chain graph. Note that $A$ must be an interval and that $B$ only enters the problem if one of its elements is a neighbor of one of the two extreme elements of $A$. We thus have eight cases, depending on the three possibilities for these two neighbors of $A$ (in $B$, in $V \backslash B$, or no neighbor, i.e., end of the chain). We consider all 8 cases, where $C$ is a non trivial subset of $A$, and compute a lower bound on $F(B \cup C) - F(B) - \frac{|C|}{|A|}[ F( B \cup A) - F(B) ]$.
\BIT
\item[--] left: $B$, right: $B$. $F(B)=2$, $F(B \cup A)=0$, $F(C \cup B) \geqslant 2$. Bound= $2 \frac{|C|}{|A|}$
\item[--]left: $B$, right: $V \backslash B$. $F(B)=1$, $F(B \cup A)=1$, $F(C \cup B) \geqslant 1$. Bound= $0$

\item[--]left: $B$, right: none. $F(B)=1$, $F(B \cup A)=0$, $F(C \cup B) \geqslant 1$. Bound= $\frac{|C|}{|A|}$

\item[--]left: $V \backslash B$, right: $B$. $F(B)=1$, $F(B \cup A)=1$, $F(C \cup B) \geqslant 1$. Bound= $0$

\item[--]left: $V \backslash B$, right: $V \backslash B$. $F(B)=0$, $F(B \cup A)=2$, $F(C \cup B) \geqslant 2$. Bound= $2 - 2\frac{|C|}{|A|}$

\item[--]left: $V \backslash B$, right: none. $F(B)=0$, $F(B \cup A)=1$, $F(C \cup B) \geqslant 1$. Bound= $1 - \frac{|C|}{|A|}$

\item[--]left: none, right: $B$. $F(B)=2$, $F(B \cup A)=0$, $F(C \cup B) \geqslant 2$. Bound= $\frac{|C|}{|A|}$

\item[--]left: none, right: $V \backslash B$.  $F(B)=1$, $F(B \cup A)=0$, $F(C \cup B) \geqslant 1$. Bound= $\frac{|C|}{|A|}$

\item[--]left: none, right: none. $F(B)=0$, $F(B \cup A)=0$, $F(C \cup B) \geqslant 1$. Bound= $1$.

\EIT
Considering all cases, we get a lower bound of zero, which shows that the paths are agglomerative. However, there are two cases where no strictly positive lower bounds are possible, namely when the two extremities of $A$ have respective neighbors in $B$ and $V \backslash B$. Given that $B$ is a set of higher values for the parameters and $V \backslash (A \cup B)$ is a set of lower values, this is exactly a staircase. When there is no such staircase, we get a lower bound of $\min \{ |A|/|C|, 1 - |A|/|C|\}$, hence $\eta=1$.
 \EIT

\section{Proof of Proposition~\ref{prop:proxL1}}
 
\begin{proof}
We denote by $w$ the unique mininizer of $\frac{1}{2}\|w - z \|_2^2 + f(w)$ and $s$ the associated dual vector in $B(F)$. The optimality conditions are  $w-z + s = 0$,  and $f(w) = w^\top s$ (again from optimality conditions for pointwise maxima).

We assume that $w$ takes distinct values $v_1,\dots,v_m$ on the sets $A_1,\dots,A_m$. We define $t$ as 
$t_k = {\rm sign}(w_k) ( |w_k| - \lambda )_+$ (which is the unique minimizer of $\frac{1}{2} \| w - t\|_2^2 + \lambda \| t \|_1$). The constant sets of $t$ are $A_j$, for $j$ such that $|v_j| > \lambda$ and zero for the union of all $A_j$'s such that $|v_j| \leqslant \lambda$.  Since $t$ is obtained by soft-thresholding $w$, which corresponds to $\ell_1$-proximal problem, we have that $t - w + \lambda q=0$ with $\| q\|_\infty \leqslant 1$ and $q^\top t = \| t\|_1$.

By combining these two equalities, with have
$ t -z + s + \lambda q = 0$ with $\| q\|_\infty \leqslant 1$,  $q^\top t = \| t\|_1$ and $s \in B(F)$. The only remaining element to show that $t$ is optimal for the full problem is that $f(t) = s^\top t$. This is true since the level sets of $w$ are finer than the ones of $t$ (i.e., it is obtained by grouping some values of $w$), with no change of ordering~\cite{submodular_tutorial}.
\end{proof}

\section{Proof of Proposition~\ref{prop:patterns}}

 \begin{proof}
From Lemma~\ref{lemma:opt}, the solution has to correspond to a lattice $\mathcal{D}$ and we only have to show that with probability one, the vector $v = (M^\top M)^{-1} ( M^\top z - \lambda t)$ has distinct components, which is straightforward because it has an absolutely continuous density with respect to the Lebesgue measure.
\end{proof}

\section{Proof of Theorem~\ref{theo:support}}

\begin{proof}
From Lemma~\ref{lemma:opt}, in order to correspond to the same lattice $\mathcal{D}$, we simply need that (a) 
$v = (M^\top M)^{-1} ( M^\top z - \lambda t)$ satisfies the order relationships imposed by $\mathcal{D}$ and that (b)
$$\frac{1}{\lambda}( \idm - M (M^\top M)^{-1} M^\top ) z + M (M^\top M)^{-1} t \in B(F).$$
Condition (a) is satisfied as soon as $\| w - w^\ast\|_\infty \leqslant \nu$, which is implied by
\BEQ
\label{eq:AA}
 \sigma  \| (M^\top   M)^{-1}  M^\top \varepsilon  \|_\infty
 \leqslant \nu / 4 \ \ \mbox{ and }\ \   \| \lambda (M^\top  M)^{-1}   t\|_\infty \leqslant \nu / 4.
 \EEQ
 The second condition in \eq{AA} is met by assumption, while the first one leads to the sufficient conditions
 $   \forall j, \ | \varepsilon(A_j) | \leqslant \nu |A_j| /4 \sigma$, leading  by the union bound to the probabilities
 $\sum_{j=1}^m \exp\big( -\frac{ \nu^2 |A_j|}{32 \sigma^2} \big)$.
 
Following the same reasoning than in the proof of Prop.~\ref{prop:agglo}, condition (b) is satisfied as soon as for all $j \in \{1,\dots,m\}$, and all $C_j \subset A_j$,
$$
\big[ \sigma \frac{1}{\lambda}( \idm - M (M^\top M)^{-1} M^\top ) \varepsilon \big](C_j)
\leqslant \eta_j \min \bigg\{
\frac{ |C_j|}{|A_j|}  , 1 - \frac{ |C_j|}{|A_j|}  
\bigg\}.
$$
Indeed, this implies that for all $j$,
\BEAS
& &\big[ \frac{1}{\lambda}( \idm - M (M^\top M)^{-1} M^\top ) z + M (M^\top M)^{-1} t \big](C_j)\\
& = &  \big[ \frac{\sigma}{\lambda}( \idm - M (M^\top M)^{-1} M^\top ) \varepsilon + M (M^\top M)^{-1} t \big](C_j)\\
& \leqslant  &  \eta_j \min \bigg\{
\frac{ |C_j|}{|A_j|}  , 1 - \frac{ |C_j|}{|A_j|}  
\bigg\} + \frac{|C_j|}{|A_j| }( F(B_{j-1}\cup A_i) - F(B_{j-1}) )\\
& \leqslant  &    F(B_{j-1}\cup C_j) - F(B_{j-1}) ,
\EEAS
which leads to $\big[ \frac{1}{\lambda}( \idm - M (M^\top M)^{-1} M^\top ) z + M (M^\top M)^{-1} t \big] \in B(F)$ using the sequence of inequalities used in the proof of Prop.~\ref{prop:agglo}.

From Lemma~\ref{lemma:conc} below, we thus get the probability
$
2 \sum_{j=1}^m |A_j| \exp \Big( - \frac{ \lambda^2 \eta_j^2}{ 2\sigma^2 |A_j|^2 } \Big).
$
 \end{proof}
\begin{lemma}
\label{lemma:conc}
For $F(A) = \min \big\{ \frac{|A|}{p}, 1 - \frac{|A|}{p} \big\}$, and $s$ normal with mean zero and variance $I - \frac{1_V1_V^\top}{p}$, we have:
$$\mathbb{P} \Big( \max_{A \subset V, A \neq \varnothing, A \neq V} \frac{s(A)}{F(A)} \geqslant t
\Big) \leqslant 2 p \exp \Big( - \frac{ t^2}{2 p^2} \Big).$$
\end{lemma}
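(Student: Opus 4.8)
The plan is to collapse the maximum over all $2^p - 2$ subsets to a maximum over the $p$ coordinates of $s$, exploiting the ``tent'' shape of $F$ together with the degeneracy of the covariance of $s$ in the direction $1_V$, and then to finish with a union bound and a scalar Gaussian tail estimate.

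First I would record two structural facts about the centered Gaussian vector $s$ with covariance $\Sigma = \idm - \frac{1_V1_V^\top}{p}$: one has $\Sigma_{kk} = 1 - \tfrac{1}{p} \leqslant 1$, so each $s_k$ is centered Gaussian with variance at most $1$; and $\var(s(V)) = 1_V^\top \Sigma 1_V = p - \tfrac{p^2}{p} = 0$, so $s(V) = 0$ almost surely, and hence $s(A) = -s(V \backslash A)$ for every $A$.

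The crux is the deterministic identity $\max_{A \subset V,\, A\neq\varnothing,\, A\neq V} \frac{s(A)}{F(A)} = p \max_{k \in V} |s_k|$. For the inequality ``$\leqslant$'' one splits on $|A|$. If $|A| \leqslant p/2$, then $F(A) = |A|/p$ and, since each summand is at most $\max_j s_j$, $s(A) \leqslant |A|\max_j s_j$, so $s(A)/F(A) \leqslant p\max_j s_j \leqslant p\max_j|s_j|$. If $|A| > p/2$, put $B = V \backslash A$ (so $|B| < p/2$, $F(A) = |B|/p$, $s(A) = -s(B)$) and apply the same estimate to $B$ with $-s$ in place of $s$: $s(A)/F(A) = -p\,s(B)/|B| \leqslant p\max_j(-s_j) \leqslant p\max_j|s_j|$. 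The reverse inequality --- not needed for the bound --- holds because $\sum_j s_j = 0$ forces $\max_j s_j \geqslant 0 \geqslant \min_j s_j$ and the value $p\max_j|s_j|$ is attained at a singleton or at the complement of a singleton. Given the identity, the conclusion is immediate: $\mathbb{P}(\max_A s(A)/F(A) \geqslant t) = \mathbb{P}(\max_k |s_k| \geqslant t/p)$, and by the union bound over $k = 1,\dots,p$ together with $\mathbb{P}(|s_k| \geqslant t/p) \leqslant 2\exp(-t^2/(2p^2\var(s_k))) \leqslant 2\exp(-t^2/(2p^2))$ (using $\var(s_k) \leqslant 1$), we obtain exactly $2p\exp(-t^2/(2p^2))$.

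I expect no genuine obstacle here: once the identity is in hand, everything reduces to a one-line scalar Gaussian computation. The single point that needs a moment's thought is the case $|A| > p/2$, where one must pass to the complement; this is exactly where $s(V) = 0$ and the symmetry of $F$ are used. A variant derivation of the identity, avoiding the complement manipulation, observes that $a \mapsto \max_{|A| = a} s(A)$ is concave in $a$ and vanishes at $a = 0$ and $a = p$ (its successive increments are the order statistics of $s$, hence non-increasing), whereas $a \mapsto \min\{a/p,\,1-a/p\}$ is the tent function, so the ratio is maximized at $a = 1$ or $a = p-1$, giving the same value.
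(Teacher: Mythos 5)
Your proof is correct and takes essentially the same route as the paper's: both arguments collapse the maximum over all subsets to $p$ times a maximum over single coordinates by exploiting the tent shape of $F$ together with $s(V)=0$, and then conclude with a union bound and a scalar Gaussian tail estimate. The only cosmetic difference is where the factor $2$ arises --- you get it from a two-sided tail bound on $|s_k|$ after a fully deterministic reduction (passing to complements for $|A|>p/2$), whereas the paper splits the event according to whether $|A|$ is below or above $p/2$ and uses the distributional symmetry of $s$ before applying a one-sided tail bound.
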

\begin{proof}
Since $F$ depends on uniquely on the cardinality $|A|$ and is symmetric we have, with $\tilde{s} \in \rb^p$ the sorted (in descending order) components of $s$, and $h(a) = \min\{ a/p, 1- a/p\}$:
\BEAS
 & & \mathbb{P}\big( \max_{A \subset V, A \neq \varnothing, A \neq V} \frac{s(A)}{F(A)} \geqslant t
\big) \\
& = & \mathbb{P}\big( \max_{k \in \{1,\dots,p-1\} } \frac{\tilde{s}(\{1,\dots,k\})}{h(k)} \geqslant t
\big) 
\\
& \leqslant  & \mathbb{P}\big( \max_{k \in \{1,\dots,\lfloor p/2\rfloor-1\} } \frac{\tilde{s}(\{1,\dots,k\})}{h(k)} \geqslant t
\big) + \mathbb{P}\big( \max_{k \in \{ \lfloor p/2\rfloor,\dots,p-1\} } \frac{\tilde{s}(\{1,\dots,k\})}{h(k)} \geqslant t
\big) \\
& \leqslant  & 2 \mathbb{P}\big( \max_{k \in \{1,\dots,\lfloor p/2\rfloor-1\} } \frac{\tilde{s}(\{1,\dots,k\})}{k/p} \geqslant t
\big) \mbox{ because of symmetry due to the covariance of $s$ }
 \\
 & \leqslant  & 2 \mathbb{P}\big( \max_{k \in \{1,\dots,\lfloor p/2\rfloor-1\} } \frac{\tilde{s}(\{1,\dots,k\})}{k/p} \geqslant t
\big) 
 \\
 & \leqslant  & 2 \mathbb{P}\big( \max_{k \in \{1,\dots,p \} } s_k \geqslant t/p
\big) 
 \\
 & \leqslant  & 2 p \exp( -   t^2/ 2 p^2) .
\EEAS
\end{proof}

We now consider the three special cases:
\BIT
\item \textbf{One-dimensional total variation}: without the staircase effect, as shown in Appendix~\ref{app:agglo}, we have $\eta_j=1$. Moreover, $|F(B_{j})-F(B_{j-1})| \leqslant 2$, and thus \eq{lambda} leads to $\lambda \leqslant \frac{\nu}{8} \min_{j} |A_j|$. Using the largest possible $\lambda$ in \eq{lambda}, we obtain a probability greater than
\BEAS
&& 1 - \sum_{j=1}^m \exp\big( -\frac{ \nu^2 |A_j|}{32 \sigma^2} \big) - 2 \sum_{j=1}^m |A_j| \exp\big( - \frac{ \lambda^2 \eta_j^2}{ 2\sigma^2 |A_j|^2 } \big) \\
& \geqslant & 1 - \sum_{j=1}^m \exp\big( -\frac{ \nu^2 |A_j|}{32 \sigma^2} \big) - 2 \sum_{j=1}^m |A_j| \exp\big( - \frac{ \nu^2 \min_{j} |A_j|^2}{ 128 \sigma^2 \max_{j} |A_j|^2 } \big) \\
& \geqslant & 1 - \sum_{j=1}^m \exp\big( -\frac{ \nu^2 |A_j|}{32 \sigma^2} \big) - 2p \exp\big( - \frac{ \nu^2 \min_{j} |A_j|^2}{ 128 \sigma^2 \max_{j} |A_j|^2 } \big) \\
& \geqslant & 1 - 4 p \exp\big( - \frac{ \nu^2 \min_{j} |A_j|^2}{ 128 \sigma^2 \max_{j} |A_j|^2 } \big) ,
\EEAS
because the second term is always greater than the third one.
\item \textbf{Two-dimensional total variation}: we simply build the following counter-example:

\begin{center}\includegraphics[scale=.35]{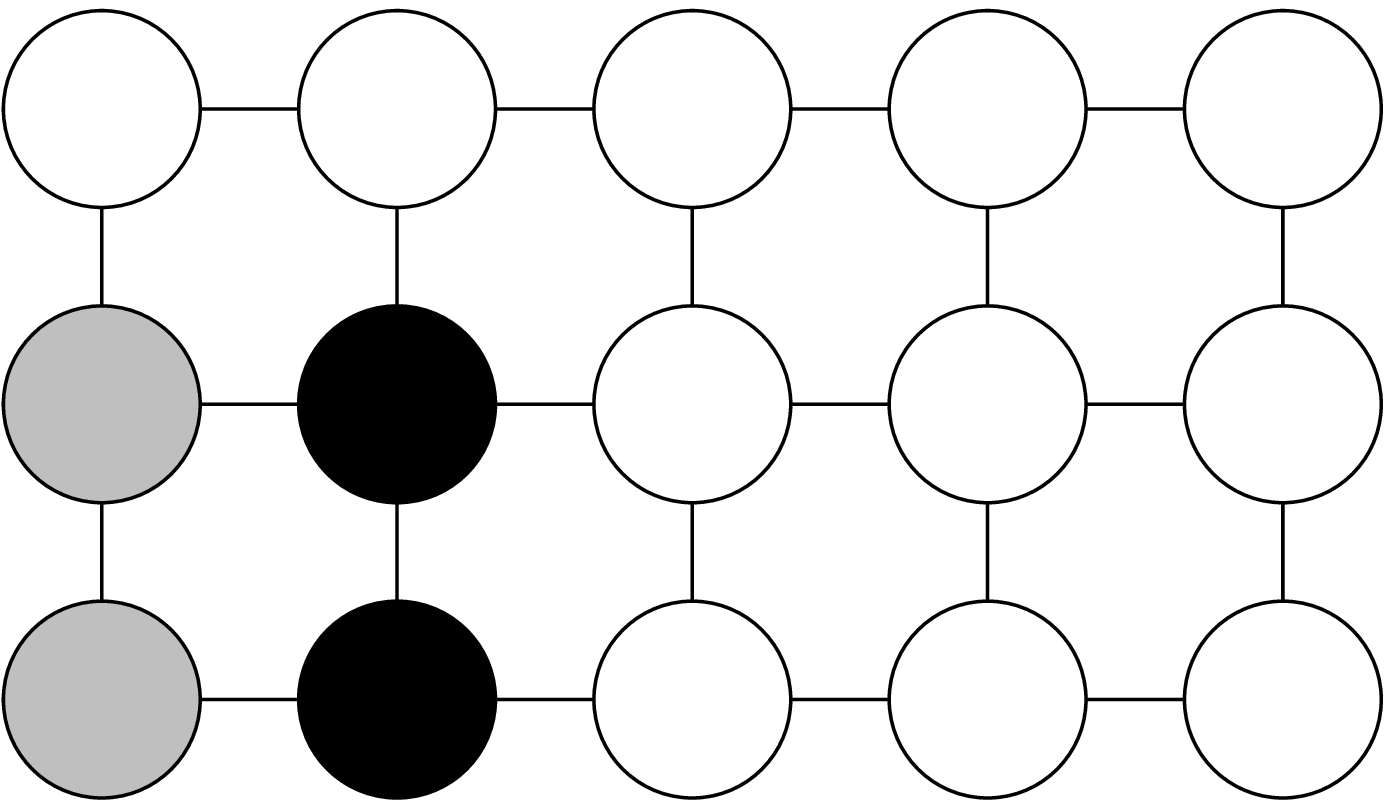}
\end{center}

where $B$ are the black nodes, $C$ the gray nodes and $A$ the complement of $B$. We indeed have $A$ connected, and $F(B \cup C) - F(B) = 4 - 5 = - 1$, $F(B \cup A) - F(B) = - 5$, leading to
$F(B \cup C) - F(B) - \frac{|C|}{|A|}[ F( B \cup A) - F(B) ] = -1 + 5 \times \frac{2}{13} = - \frac{3}{13}$.

We also illustrate this in \myfig{2dTV}, where we show that depending on the shape of the level sets (which still have to be connected), we may not recover the correct pattern, even with very small noise.

 \begin{figure}
\begin{center}
 
 \hspace*{-1.5cm}
\includegraphics[scale=.36]{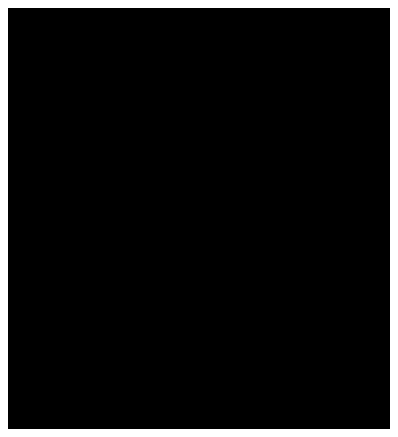} \hspace*{-1cm}
\includegraphics[scale=.36]{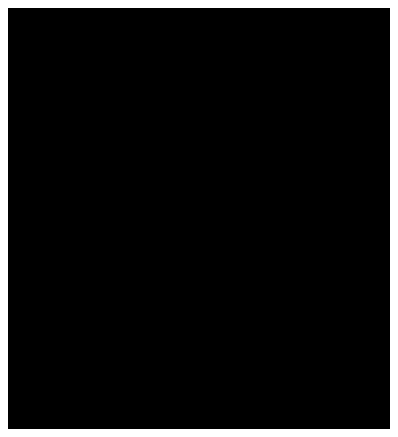} \hspace*{.1cm}
\includegraphics[scale=.36]{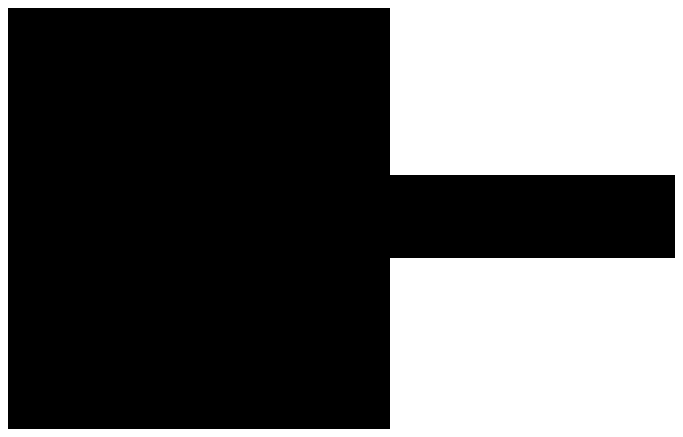}\hspace*{-1cm}
\includegraphics[scale=.36]{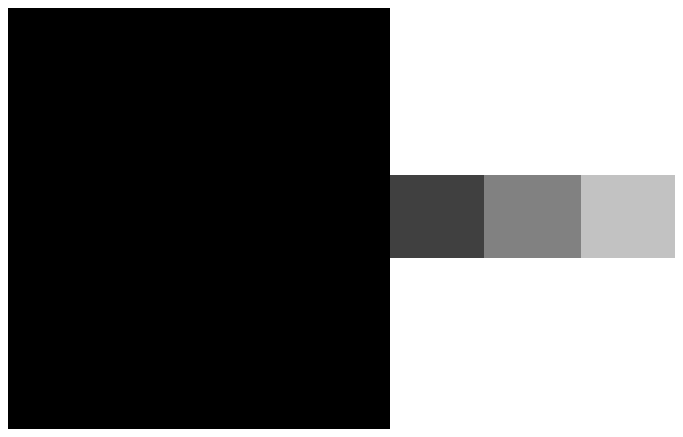}\hspace*{-1.5cm}

\vspace*{-1cm}

\end{center}
\caption{Signal approximation with the two-dimensional total variation: For two piecewise constant images with two values, the estimation may (left case) or may not (right case) recover the correct level sets, even with infinitesimal noise. For the two cases, left: original pattern, right: best possible recovered level sets.
}
\label{fig:2dTV}

\end{figure}

%\item \textbf{Clustering with $F(A) = |A| \dot |V \backslash A| $}:
\EIT

 \bibliographystyle{unsrt}
\bibliography{submodular}

\end{document}